\documentclass{article}

\usepackage[preprint]{icml2026}

\usepackage{hyperref}
\usepackage{url}

\usepackage{booktabs}       

\usepackage{multirow}         
\usepackage{arydshln}
\usepackage{algorithm}
\usepackage{algorithmic}
\usepackage{graphicx} 
\usepackage{subcaption} 
\usepackage{amsmath}
\usepackage{amsfonts}       
\usepackage{amsthm}
\usepackage{amssymb}
\usepackage{xcolor} 

\theoremstyle{theorem}
\newtheorem{theorem}{Theorem}[section]
\newtheorem{lemma}[theorem]{Lemma}

\newtheorem{definition}[theorem]{Definition}
\newtheorem{assumption}[theorem]{Assumption}

\theoremstyle{remark}
\newtheorem{remark}[theorem]{Remark}

\newcommand{\mc}[1]{\mathcal{#1}}
\newcommand{\bs}[1]{\boldsymbol{#1}}
\newcommand{\bb}[1]{\mathbb{#1}}
\renewcommand\algorithmiccomment[1]{%
  \hfill\#\ {\textit{#1}}%
}

\usepackage{cleveref}
\usepackage[textsize=tiny]{todonotes}

\icmltitlerunning{AdaMeZO: Adam-style Zeroth-Order Optimizer for LLM Fine-tuning Without Maintaining the Moments}

\begin{document}

\twocolumn[
  \icmltitle{AdaMeZO: Adam-style Zeroth-Order Optimizer for LLM Fine-tuning \\ Without Maintaining the Moments}



  \icmlsetsymbol{equal}{*}

  \begin{icmlauthorlist}
    \icmlauthor{Zhijie Cai}{equal,yyy,zzz}
    \icmlauthor{Haolong Chen}{equal,yyy,zzz}
    \icmlauthor{Guangxu Zhu}{yyy,zzz,uuu}
  \end{icmlauthorlist}

  \icmlaffiliation{yyy}{Shenzhen Research Institute of Big Data}
  \icmlaffiliation{zzz}{The Chinese University of Hong Kong-Shenzhen}
  \icmlaffiliation{uuu}{Shenzhen Loop Area Institute}

  \icmlcorrespondingauthor{Guangxu Zhu}{gxzhu@sribd.cn}

  \icmlkeywords{Machine Learning, ICML}

  \vskip 0.3in
]



\printAffiliationsAndNotice{}  

\begin{abstract}
  Fine-tuning LLMs is necessary for various dedicated downstream tasks, but classic backpropagation-based fine-tuning methods require substantial GPU memory. To this end, a recent work, MeZO, which relies solely on forward passes to fine-tune LLMs, significantly reduces GPU requirements at the cost of slower convergence due to its indifference to loss landscapes. Standard solutions, such as Adam, explore loss landscapes by estimating the first- and second-order moments and storing them in memory to guide the model's movement through dimensions with lower curvature and vice versa. However, directly applying Adam negates MeZO's advantage as it will triple the memory requirement. In light of this, we propose AdaMeZO, a zeroth-order optimizer that leverages Adam-style first- and second-moment estimates without maintaining them in memory. We present a theoretical analysis of AdaMeZO, corroborated by extensive experiments demonstrating AdaMeZO's performance, showing that AdaMeZO can outperform MeZO while requiring up to $70\%$ fewer forward passes. Trajectory visualizations affirm AdaMeZO's ability to adapt to diverse loss landscapes. 
\end{abstract}

\section{Introduction}

\begin{table*}[h]
\caption{Key features for AdaMeZO and methods in comparison. $P$ in the first column denotes the amount of memory required to store the model weight, and $B \gg P$ denotes the amount of memory required to perform backpropagation. $\delta \ll 1$ is a small positive number. ``FP'' abbreviates forward pass.}
\begin{center}
\small
\label{table:memory}
\begin{tabular}{rcccc}
\toprule
\multicolumn{1}{l}{} & Param. memory    & FP per step & $1$st moment & \multicolumn{1}{l}{$2$nd moment} \\ \hline
Adam \cite{kingma2014adam}                & $3P+B$ & $1$                    & $\checkmark$      & $\checkmark$                          \\
MeZO \cite{malladi2023fine}                & $P$     & $2$                     & $\times$     & $\times$                         \\
HELENE \cite{zhao2024helene}              & $3P$    & $2$                     & $\checkmark$      & $\checkmark$                          \\
HiZOO \cite{zhao2024second}               & $2P$    & $3$                     & $\times$     & $\checkmark$                          \\
\hdashline
AdaMeZO                 & $\mathbf{(1+\delta)P}$ & $\mathbf{2}$                     & $\checkmark$      & $\checkmark$                          \\ \bottomrule
\end{tabular}
\end{center}
\end{table*}

Fine-tuning LLMs is necessary for specialized downstream tasks and has recently attracted significant attention. Many works have emerged that aim to tune models while accessing as little memory as possible. Popular first-order methods known as parameter-efficient fine-tuning (PEFT) to alleviate the heavy memory cost by modifying only a small (potentially extra) part of the whole model \cite{hu2022lora, li2021prefix, lester2021power, dettmers2023qlora, pan2024lisa}. Additionally, a zeroth-order method \cite{malladi2023fine} enables discarding backpropagation, the primary contributor to LLM fine-tuning's memory cost, making it accessible on resource-limited devices.

As shown in \Cref{table:memory}, MeZO features an SGD-style update rule \cite{rumelhart1986learning, bottou2018optimization}, allowing in-place parameter modification. After in-place model perturbation for gradient projection estimation, the gradients are not dumped into memory; instead, they are generated by a pseudo-random number generator (PRNG) before being scaled by the previously computed projection, reducing the memory cost for fine-tuning to the equivalent of deploying one. However, updating the model with only the most recent gradient estimate can lead to slower convergence, especially with noisy, isotropic zeroth-order gradient estimators. In comparison, adaptive optimizers like Adam \cite{kingma2014adam} and AdamW \cite{loshchilov2017decoupled}, which correct updates with preconditioners, are more widely adopted since the loss landscapes of LLMs exhibit complex curvature spectra across different dimensions, as documented in \cite{sagun2016eigenvalues, ghorbani2019investigation, zhang2023eva, das2024towards}.

\begin{figure*}[t]
    \centering
    \begin{minipage}[c]{0.32\textwidth}
        \centering
        \includegraphics[scale=0.31]{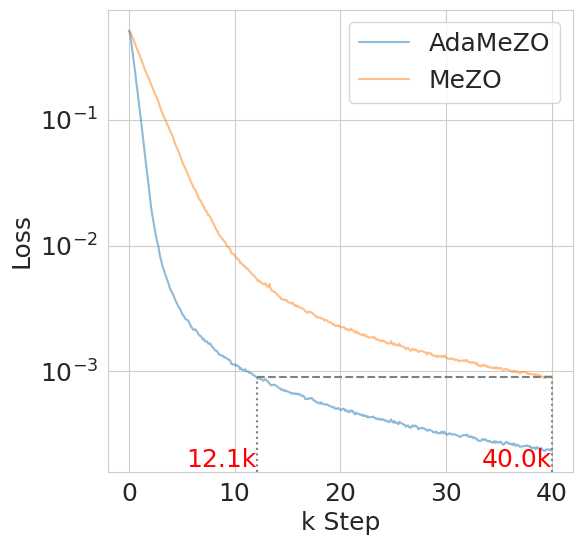}
        \subcaption{RoBERTa-Large SST-2.}
        \label{fig:loss_comparison1}
    \end{minipage}
    \begin{minipage}[c]{0.32\textwidth}
        \centering
        \includegraphics[scale=0.31]{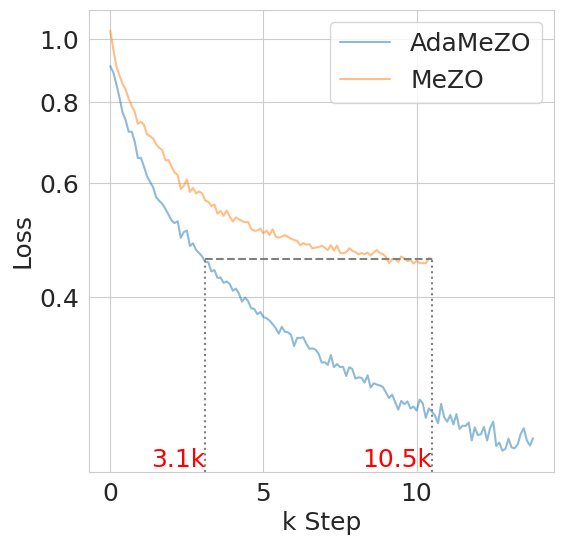}
        \subcaption{OPT-1.3b SST-2.}
        \label{fig:loss_comparison2}
    \end{minipage}
    \begin{minipage}[c]{0.32\textwidth}
        \centering
        \includegraphics[scale=0.31]{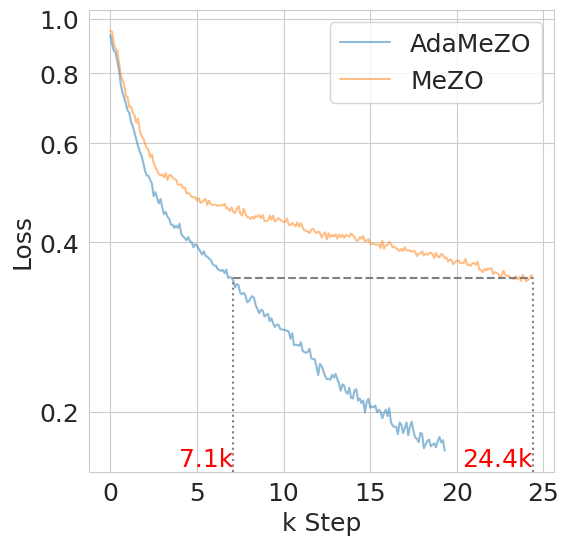}
        \subcaption{LLaMA-3b SST-2.}
        \label{fig:loss_comparison3}
    \end{minipage}
    \caption{Loss curves of MeZO and AdaMeZO on the SST2 task. When fine-tuning RoBERTa-large, OPT-1.3b, LLaMA-3b, AdaMeZO took $69.75 \%, 70.48\%,  70.90\%$ fewer forward passes to reach the loss values of MeZO at terminations, respectively. Hyperparameters and terminal conditions are detailed in \Cref{setting_fig1}.}
    \label{fig:loss_comparison}
\end{figure*}

However, adaptive optimizers retain historical gradient information in the memory. In the case of Adam, the first and second moments are the accumulated gradients and the quadratic gradients. In other words, two vectors of the same size as the model need to be kept in memory. Considering that first-order methods use backpropagation, the additional memory cost is relatively small. But in the context of zeroth-order optimizers, the memory cost is multiplied.

Adaptive zeroth-order optimizers for LLM fine-tuning have recently attracted research interest, as shown in \Cref{table:memory}. Pioneering works include HiZOO \cite{zhao2024second}, ZO-AdaMU \cite{jiang2024zo}, and Helene \cite{zhao2024helene}. HiZOO proposes approximating the diagonal Hessian using an additional forward-pass oracle, which doubles the memory required to store it. Helene is a more direct integration of zeroth-order gradient estimation and an Adam optimizer, and ZO-AdaMU replaces the moments with an uncertain version. As a result, the memory requirement is tripled to store both diagonal Hessian estimation and cumulative history gradients. However, despite the substantial increase in memory cost, they still use much less memory than first-order approaches and achieve a noticeable performance gain over MeZO.

In light of the above, we introduce AdaMeZO\footnote{Codes are available at \url{https://anonymous.4open.science/r/AdaMeZO-4547/}.}, a zeroth-order optimizer that leverages Adam-style first- and second-moment estimates to accelerate convergence without requiring additional memory to store them. This is made possible by 1) computing truncated moments that discard outdated gradients rather than faithfully maintaining the full moment estimations, and 2) block-wise generation of random gradient direction with a finer operation of the PRNG. As a result, AdaMeZO significantly reduces the number of forward passes required for convergence and improves the fine-tuned model's performance. A summary of the contributions of this work is as follows.

\begin{enumerate}
    \item We introduce AdaMeZO, an optimizer that uses zeroth-order gradient estimates and updates with Adam-style first- and second-moment estimates. Although the moments are necessary to compute the model updates, with truncated approximations and finer PRNG operations, they do not need to be stored in memory. In this way, AdaMeZO can theoretically use no additional memory to improve convergence with preconditioning.
    \item We establish a convergence bound of AdaMeZO under a non-convex assumption that recovers the convergence rate of preconditioned MeZO with multiples of memory cost. 
    \item We conduct extensive experiments to evaluate AdaMeZO's performance of AdaMeZO. We first employ 2-dimensional toy functions and visualize the trajectories of optimization. They demonstrate that AdaMeZO converges to optimal points, whereas MeZO does not, even with the same step budgets. Then we demonstrate AdaMeZO's performance by fine-tuning different models (RoBERTa \cite{liu2019roberta}, OPT \cite{zhang2022opt}, and LLaMa \cite{touvron2023llama}) for a task set identical to MeZO's. It is found that AdaMeZO almost always reaches the same termination condition as MeZO, with up to $70\%$ fewer forward passes and higher performance.
\end{enumerate}

\section{Related Works}

\subsection{Zeroth-order Optimizers for LLMs}

Zeroth-order optimization is also known as derivative-free or black-box optimization. Previously, it was used for situations where the objective function has no derivatives or obtaining derivatives is expensive. \textcolor{black}{Fine-tuning LLMs falls into the latter case and sometimes both for non-differentiable objectives \cite{tang2023zeroth, zhang2024revisiting}.} In the context of modern deep learning, it translates to the emission of auto-differentiation by backward propagation \cite{rumelhart1986learning}, resulting in hugely reduced memory consumption. Some past work on zeroth-order optimizers include \cite{spall1992multivariate, spall1997one, vakhitov2009algorithm, agarwal2009information, raginsky2011information, jamieson2012query, wang2020zeroth, baines2021fairscale}. MeZO \cite{malladi2023fine} firstly adopts the classical SPSA \cite{spall1992multivariate} to fine-tune billion-level dimension LLMs based on low rank assumptions on LLM fine-tuning, achieving comparable performance with much fewer GPU hours. A survey on concurrent extensions on top of MeZO can be found in \Cref{sec:MeZOplus}. \textcolor{black}{Notably, from-scratch zeroth-order optimization on smaller networks is also of great research interest \cite{chen2023deepzero}.}

\subsection{First-order Optimizers for LLMs}

First-order optimization algorithms form the backbone of training or fine-tuning LLMs, offering computational efficiency and scalability across billions of parameters. One of the most classic solutions is Adam \cite{kingma2014adam}, which updates based on first- and second-order moments. Some of its variants are as follows. AdamW \cite{loshchilov2017decoupled} introduces adaptive learning rates via moment estimates, achieving faster convergence on nonconvex objectives. LAMB \cite{you2019large} features a layer-wise adaptation strategy to accelerate the training of large models employing large batches. Adafactor \cite{shazeer2018adafactor} reduces memory usage by maintaining factored second‐moment estimates rather than the faithful estimates. AdaBelief \cite{zhuang2020adabelief} replaces Adam’s second moment estimates with a squared gradient with the squared difference between the gradient and its running mean to improve convergence and generalization. Lion \cite{chen2022evolved} uses only sign‐based moment updates without per-parameter scaling to reduce memory costs. Adabound \cite{luo2019adaptive} stabilizes learning rates between dynamic lower and upper thresholds to transition from adaptive behavior to SGD-like stability. RAdam \cite{liu2019variance} introduces rectification to stabilize adaptive learning rates, improving training stability in the early iterations. \cite{defazio2024road} introduced a schedule-free optimizer that requires no additional hyperparameters beyond those of standard optimizers with momentum. Interestingly, \cite{zhang2024adam} finds that block structures of diagonal Hessians can help reduce memory costs without harming performance. All of these variants feature empirical estimations of first and second moments, but with changes such as moment centering and regularization, which implies that the proposed method can also be applied to the zeroth-order versions of these variants. \textcolor{black}{Additionally, \cite{pethick2025training} explores leveraging linear minimization oracles to adapt to loss landscapes without Adam-style updates. }

\subsection{Acceleration by Adam}

Compared with first-order optimizers, second-order informed optimizers incorporate second-order information during gradient calculation. The design of Adam mimics Newton's method with second derivatives. Specifically, the second moment can be viewed as a rough approximation to the inverse Hessian. Lines of work provide analytic or numeric support for Adam's near-diagonal Hessians estimation in deep learning. \cite{das2024towards} formalizes that diagonally-dominant Hessians make Adam mathematically faster. \cite{zhang2024transformers} finds block-diagonal Hessians in real neural networks and shows Adam outperforms SGD precisely due to this structure. Empirically, \cite{elsayed2024revisiting} measures strong diagonal dominance in MLP Hessians. \cite{gui2021laplace} demonstrates that over-parameterization further drives the Hessian toward a diagonal form. Interestingly, \cite{ghorbani2019investigation} found that the Hessian spectra of deep neural networks become stable after less than $1 \%$ training step budget. \textcolor{black}{\cite{kunstner2023noise} finds that Adam's great performance could be attributed to its similarity to sign descent with momenta.}

\section{Methods}\label{sec:methods}

In this section, we first introduce the classic forward-pass-only gradient estimator, SPSA, which is the foundation of MeZO \cite{malladi2023fine}. Then, we will explain why direct splicing of SPSA with the Adam-style update rule leads to excessive memory usage and how our technique can prevent this.

\subsection{Preliminaries}

\begin{definition}[Simultaneous Perturbation Stochastic Approximation, SPSA \cite{spall1992multivariate}]
Given a model with weight $\bs{w}_t$ at step $t$ and objective function $\mc{L}$, SPSA estimates the gradient on a batch $\mc{B}$ with perturbation scale $\mu > 0$ and random direction $\bs{z}_t$ as \begin{align}\label{eq:spsa}
    \bs{g}_t = \frac{\mc{L}(\bs{w}_{t-1} + \mu \bs{z}_t, \mc{B}_t) - \mc{L}(\bs{w}_{t-1} - \mu \bs{z}_t, \mc{B}_t)}{2 \mu} \bs{z}_t.
\end{align}
\end{definition}

Following prior works, we assume $\bs{z} \sim \mc{N}(\bs{0}, I_d)$. It can be shown that $\bs{g}_t \to \nabla\mc{L}(\bs{w}_t, \mc{B}_t)$ as $\mu \to 0$, and is treated as an unbiased gradient estimator with a sufficiently small \textcolor{black}{perturbation scale} $\mu$. With an SGD-styled update rule of $\bs{w}_t \gets \bs{w}_t - \eta \bs{g}_t$, modifying the model parameters for gradient estimation and model update can be done in-place. MeZO runs quickly on GPUs since they can spawn random gradients the size of $77.5$ B within a second\footnote{\url{https://developer.nvidia.com/curand}}. As a result, MeZO generates less information in memory during fine-tuning than backpropagation. The method is shown to yield competitive performance \cite{malladi2023fine}.

\subsection{First Moment Can Be Recovered Without Additional Memory}

Using the first moment computed by history gradients with EMA updates is a widely used technique to cancel out instantaneous gradient noise, thereby promoting convergence. Common first-order algorithms require an additional trunk of memory of size $P$ to store the current first moment $\bs{m}_t$ as follows: \begin{align}
    \bs{m}_t \gets \beta_1 \bs{m}_t + (1 - \beta_1) \bs{g}_t, \quad \bs{w}_t \gets \bs{w}_t - \eta \bs{m}_t. \notag
\end{align}

\begin{algorithm}[t]
\caption{$h$-MeZO}\label{alg:hmezo}
\begin{algorithmic}
   \STATE {\bf Input:} Initialized model parameters $\boldsymbol{w}_0 \in \mathbb{R}^d$, loss function $\mathcal{L}: \mathbb{R}^d \to \mathbb{R}$, step budget $T$, perturbation scale $\mu$, learning rate $\eta$, horizon $h$, first EMA ratio $\beta_1$
   \STATE {\bf Output:} Trained model parameters $\boldsymbol{w}_T$

   \STATE {\tt seeds, projs} $\gets$ {\tt [], []}
   \FOR {$t = 1, \dots, T$}
   \STATE Sample batch $\mc{B}_t$ and random seed $s$
   \STATE Reset the PRNG with random seed $s$, spawn $\bs{z}_t \sim \mc{N}(\bs{0}, I_d)$
   \STATE Estimate $p_t$ using \Cref{eq:spsa} \COMMENT{in-place model perturbation}
   \STATE {\tt seeds.append(}$s${\tt)}, {\tt projs.append(}$p_t${\tt)}
   \STATE $\bs{w}_t \gets \bs{w}_t$
   \FOR {$\tau = 1, \dots, h$}
   \STATE $p \gets {\tt projs}[-\tau]$, $s \gets {\tt seeds}[-\tau]$
   \STATE Reset the PRNG with random seed $s$, spawn $\bs{z} \sim \mc{N}(\bs{0}, I_d)$
   \STATE $\bs{w}_t \gets \bs{w}_{t} - \eta \beta_1^{\tau - 1} p \bs{z}$
   \ENDFOR
   \ENDFOR
\end{algorithmic}
\end{algorithm}

However, the MeZO-style in-place parameter update allows the first moment to be approximated without storing history gradients. Specifically, we unroll the recursion into independent gradients, set a hyperparameter, the horizon $h$, and discard the outdated gradients computed more than $h$ steps ago, then employ a similar in-place parameter update process as in MeZO as \Cref{eq:hmezo} and detailed in \Cref{alg:hmezo}.
\begin{align}\label{eq:hmezo}
    \bs{m}_t \approx (1 - \beta_1)( \bs{g}_t + \beta_1 \bs{g}_{t-1} + \dots + \beta_1^{t-h-1} \bs{g}_{t-h-1}).
\end{align}

\begin{remark}
The idea behind \Cref{alg:hmezo} is that the share of a history gradient $\bs{g}_{t-t'}$ decays quickly. As an example, after sufficiently long steps, the share of $\bs{g}_{t-10}$ approximates $0.9 ^ {10} / (1 / (1-0.9)) \approx 0.0387$ at $\beta_1=0.9$. It implies that the key components of the first moment are several of the most recent gradients, while the rest are relatively safe to be omitted. Supported by the PRNG as a coder of the random gradients, \Cref{alg:hmezo} can use truncated first moments without additional memory.
\end{remark}

\subsection{Second Moment Informed Updates Without Additional Memory}

We can similarly recover a truncated second moment. However, bringing them into the update will still be memory-intensive. We investigate the issue and present our solution in this subsection.

An Adam-style update rule can be expressed as follows: \begin{align}\label{eq:adamupdate}
    \bs{m}_t &\gets \beta_1 \bs{m}_t + (1 - \beta_1) \bs{g}_t, \notag \\
    \bs{v}_t &\gets \beta_2 \bs{v}_t + (1 - \beta_2) \bs{g}_t \odot \bs{g}_t, \notag \\
    \bs{w}_t &\gets \bs{w}_t - \eta \frac{\bs{m}_t}{\sqrt{\bs{v}_t + \epsilon}}. \notag
\end{align}


Unlike \Cref{eq:hmezo}, we can't decompose the updates into independent gradients. Instead, we will get a summation of preconditioned gradients as follows: \begin{align}
    \frac{\bs{m}_t}{\sqrt{\bs{v}_t + \epsilon}} \approx& (1 - \beta_1)(\frac{\bs{g}_t}{\sqrt{\bs{v}_t+\epsilon}} + \frac{\beta_1 \bs{g}_{t-1}}{\sqrt{\bs{v}_t + \epsilon}} + \dots \notag \\
    &+\frac{\beta_1^{t-h-1} \bs{g}_{t-h-1}}{\sqrt{\bs{v}_t + \epsilon}}), \notag
\end{align}
with a common conditioner $\sqrt{\bs{v}_t + \epsilon}$ that can only be recovered with multiple gradients as follows: \begin{align}
    \bs{v}_t \approx& (1 - \beta_2)(\bs{g}_t \odot \bs{g}_t + \beta_2 \bs{g}_{t-1} \odot \bs{g}_{t-1} + \dots \notag\\
    & + \beta_2^{t-h-1} \bs{g}_{t-h-1} \odot \bs{g}_{t - h - 1}). \notag
\end{align} In other words, it is impossible to perform the update by multiple PRNG overwrites of the model weights. As a result, the preconditioner should be maintained in memory before modifying the model weights, which incurs substantial memory cost, as in previous work such as \cite{zhao2024second}. We aim to remove this additional memory cost in the following subsections.


\subsubsection{Fine-scaled Random Stream Generation by State Caching}

Since saving the entire preconditioner is memory-intensive, we try to perform blockwise preconditioned updates at a lower additional memory cost, which is not available in normal use of PRNG. A formal expression of how concurrent PRNG algorithms generate random number streams is as \Cref{alg:prng}, with algorithm-specified and deterministic state update function $F$ and extractor $O$.

\begin{algorithm}[t]
\caption{PRNG}\label{alg:prng}
\begin{algorithmic}
    \STATE {\bf Input:} Seed $s$
    \STATE {\bf Output:} Random number streams $r_n$
    \STATE Initial state mapper $I$ maps the random seed $s$ to the initial state $S_0$.
    \WHILE {No stop signal}
    \STATE PRNG outputs random stream $\{r_n\}$ by the recurrence \begin{align}
        r_{n} = O(S_{n-1}), \quad S_n = F(S_{n-1}),
    \end{align}
    \ENDWHILE
\end{algorithmic}
\end{algorithm}

For prior practices including \cite{malladi2023fine, zhao2024helene, zhao2024second}, the above process guarantees identical and complete $\bs{z}_t$ for gradient estimation, and gradient updating is obtained by caching the seed $s$, so that in-place gradient estimation and parameter updating without additional memory access functions. However, caching the random state $S$ offers the possibility to \emph{jump} to a specified position in a random number stream. It allows the PRNG to faithfully continue a particular random stream from wherever it left off, making it more flexible than seed caching. Code examples for this feature can be found in \Cref{sec:call}.

An illustration of the block-wise moment approximation is shown in \Cref{fig:adamezo}. A parameter block partition $\bs{w} = \{\bs{w}^{(1)}, \bs{w}^{(2)}, \dots, \bs{w}^{(b)}\}$ is prepared at the beginning of a parameter update. We first start at the first block. Processing of the first block is the same as how prior works exploit PRNGs. However, after the first random direction block $\bs{z}_{t-t'}^{(1)}$ is spawned, AdaMeZO records $S_{n}$, the corresponding random state, for each seed. When the spawning of the first block from each of the seeds within the horizon is finished, AdaMeZO skips the initial state mapping in \Cref{alg:prng}, and loads the cached $S_{n}$ to the PRNG for the next block, so that the contiguous random stream is generated rather than starting over again from the first output of the random stream. The process loops until all blocks have finished their updates.

\begin{figure}[t]
    \centering
    \includegraphics[width=1\linewidth]{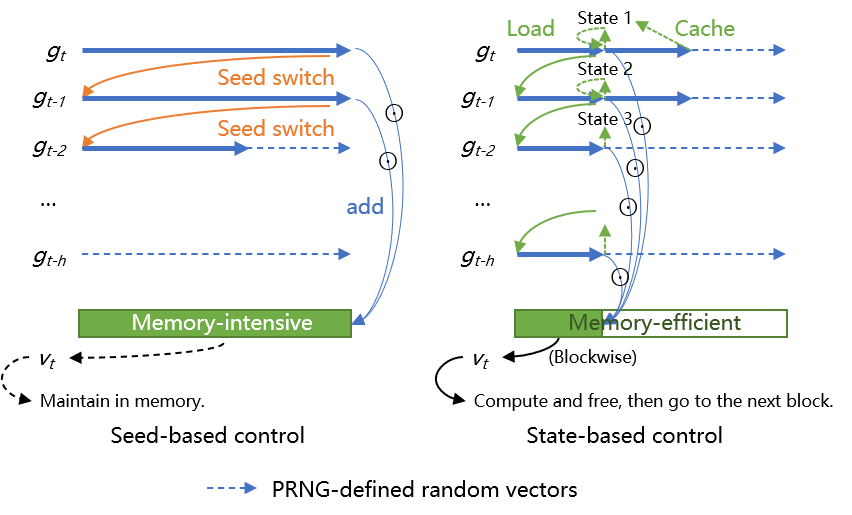}
    \caption{Block-wise moment approximation in AdaMeZO. $\odot$ denotes the Hadamard product.}
    \label{fig:adamezo}
\end{figure}


\begin{figure*}[t]
    \centering 
    \begin{subfigure}[b]{0.32\textwidth} 
        \includegraphics[width=\textwidth]{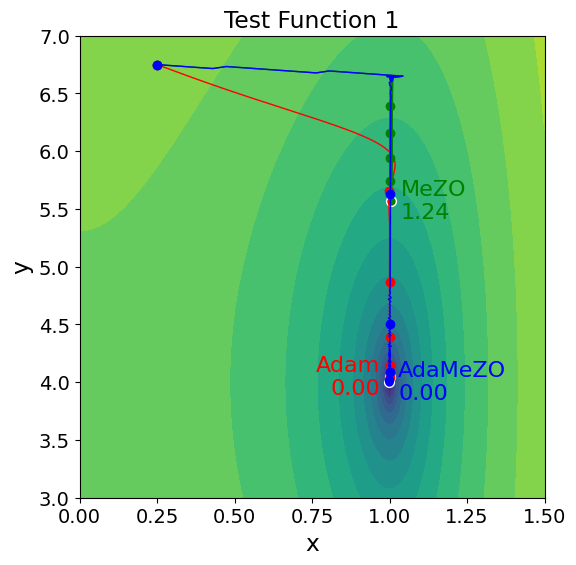} 
        \label{fig:sub1}
    \end{subfigure}
    \hfill 
    \begin{subfigure}[b]{0.32\textwidth}
        \includegraphics[width=\textwidth]{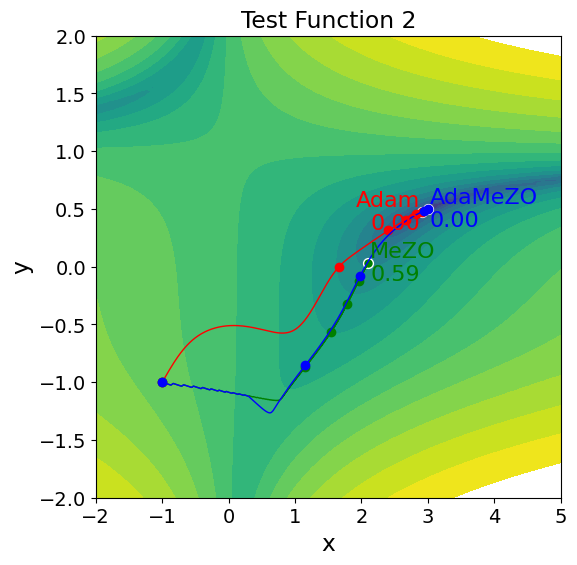}
        \label{fig:sub2}
    \end{subfigure}
    \hfill
    \begin{subfigure}[b]{0.32\textwidth}
        \includegraphics[width=\textwidth]{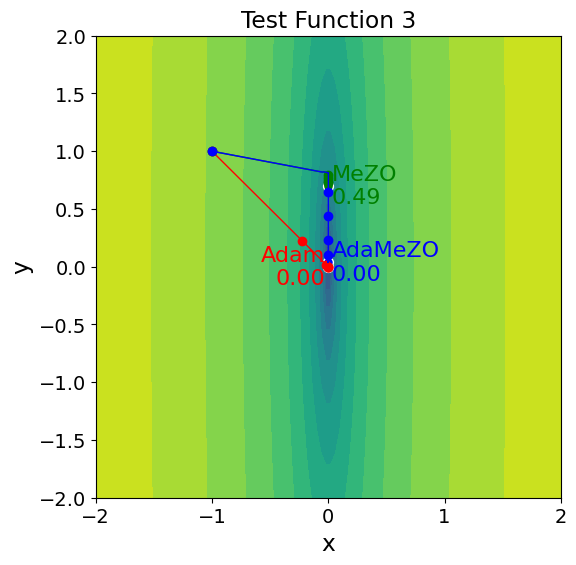}
        \label{fig:sub3}
    \end{subfigure}
    \caption{Optimization trajectories on test functions. The loss values at termination are labeled.}
    \label{fig:main}
\end{figure*}

\subsubsection{Adam-style Updates with Zeroth-Order Gradients}

With random state caching, we can update models according to \Cref{eq:adamupdate} block-wise, which is impossible for seed caching as employed by previous works, since seed caching only allows the random stream to be spawned from the initial digit. However, we need some warm-up steps to accumulate history gradients before estimating finite-horizon moments. Due to page limits, we elaborate on the process in \Cref{alg:adamezo} of the Appendix.

\begin{remark}
    It is worth mentioning that caching random states adds a minimal memory cost compared to caching seeds. In Philox \cite{salmon2011parallel}, the default choice of CUDA PRNG, random state $S$ consists of a $64$-bit random seed, a $64$-bit subsequence identifier, and a $64$-bit offset. The Mersenne Twister \cite{matsumoto1998mersenne}, the default CPU PRNG, maintains similar information to random states. Therefore, caching the random states incurs a negligible additional memory cost at the bit level compared to caching seeds.
\end{remark}

\begin{remark}
    Though the first and second moments do not go into the memory, recovering them requires a temporary additional memory trunk, whose size scales to the size of the block, corresponding to the $\delta P$ term in \Cref{table:memory}. A natural block strategy is the different layers of the model. If the model consists of $32$ layers, the additional memory introduced in the second moment approximation is roughly $2/32 P$ ($1 / 32P$ for $\bs{m}_t^{(b)}$ and $\bs{v}_t^{(b)} $ each). However, the block can be made as small as a single $ 1$-parameter block. Therefore, AdaMeZO can theoretically approximate the moments by performing frequent random state dumps and loads without incurring additional memory requirements. 
\end{remark}

\section{Theory}\label{sec:theory}

We employ the following widely adopted assumptions to facilitate an analysis.

\begin{assumption}[L-smooth]\label{ass:lsmooth}
For any weight vector $\bs{w}_1, \bs{w}_2 \in \bb{R}^d$, for a constant $0 < L < \infty$ it holds that \begin{align}
    \mc{L}(\bs{w}_2) \leq \mc{L}(\bs{w}_1) + \langle\nabla\mc{L}(\bs{w}_1), \bs{w}_2 - \bs{w}_1\rangle + \frac{L}{2} \|\bs{w}_2 - \bs{w}_1\|_2^2. \notag
\end{align}
\end{assumption}

\begin{assumption}[Bounded gradient variance]\label{ass:bgv}
The stochastic gradient $\nabla\mc{L}(\bs{w}_t, \mc{B}_t)$ has no bias and $\sigma^2$ variance due to batch stochasticity, specifically \begin{align}
    \bb{E}_{\mc{B}_t}\left[\nabla\mc{L}(\bs{w}_t, \mc{B}_t)\right] - \nabla\mc{L}(\bs{w}_t) &= 0, \\
    \bb{E}_{\mc{B}_t}\left[\left\|\nabla\mc{L}(\bs{w}_t, \mc{B}_t)\right]\right\|_2^2 - \|\nabla\mc{L}(\bs{w}_t)\|^2_2 &\leq \sigma_t^2, \quad \sigma_t < \infty. \notag
\end{align}
\end{assumption}

\begin{assumption}[Bounded second moment, \cite{zhao2024second}]\label{ass:bsm}
Each entry of $\Sigma_t$ lies in the range $[s_l, s_u]$ with $0 < s_l < s_u < \infty$. \notag
\end{assumption}

\begin{assumption}[Finite gradient drift within horizon]\label{ass:fgd}
The gradient drift within the moment horizon $h$ is finite, specifically, $\bs{m}_t$ as the first moment at step $t$ satisfies \begin{align}
    \|\bs{m}_t - \nabla \mc{L}(\bs{w}_t)\|_2 \leq \mc{O}((1-\beta_1) L \eta). \notag
\end{align}
\end{assumption}

\begin{lemma}[\cite{magnus1978moments}]\label{lm:magnus}
Let $A$ and $B$ be two symmetric matrices, $\bs{z} \sim \mc{N}(\bs{0}, \bs{I}_d)$. Define $\bs{x} = \bs{z}^\top A \bs{z} \bs{z}^\top B \bs{z}$, then it holds that \begin{align}
\bb{E}_{\bs{z}}[\bs{x}] = ({\rm tr}A)({\rm tr}B) + 2 {\rm tr}(AB). \notag
\end{align}
\end{lemma}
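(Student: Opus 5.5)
Since $\bs{x} = (\bs{z}^\top A \bs{z})(\bs{z}^\top B \bs{z})$ is a degree-four polynomial in the Gaussian entries of $\bs{z}$, the plan is to expand it in coordinates and evaluate the fourth moments with Isserlis' (Wick's) theorem:
\begin{align}
\bs{x} = \sum_{i,j,k,l} A_{ij} B_{kl}\, z_i z_j z_k z_l,
\qquad
\bb{E}[z_i z_j z_k z_l] = \delta_{ij}\delta_{kl} + \delta_{ik}\delta_{jl} + \delta_{il}\delta_{jk}. \notag
\end{align}
The moment identity holds for $\bs{z} \sim \mc{N}(\bs{0}, \bs{I}_d)$ and can be checked by cases: all four indices equal gives $\bb{E}[z_i^4]=3$; two distinct pairs gives $1$; anything else vanishes by independence and zero odd moments. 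Linearity of expectation (a finite sum) then gives
\begin{align}
\bb{E}_{\bs{z}}[\bs{x}] = \sum_{i,j,k,l} A_{ij}B_{kl}\left(\delta_{ij}\delta_{kl} + \delta_{ik}\delta_{jl} + \delta_{il}\delta_{jk}\right). \notag
\end{align}

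Next I evaluate the three contractions separately.
\begin{itemize}
\item The first pairing gives $\sum_{i,k} A_{ii}B_{kk} = ({\rm tr}A)({\rm tr}B)$.
\item The second gives $\sum_{i,j} A_{ij}B_{ij} = {\rm tr}(A B^\top)$, which equals ${\rm tr}(AB)$ because $B$ is symmetric.
\item The third gives $\sum_{i,j} A_{ij}B_{ji} = {\rm tr}(AB)$ directly.
\end{itemize}
Summing the three yields $({\rm tr}A)({\rm tr}B) + 2\,{\rm tr}(AB)$, which is the claim. Symmetry is used only to merge the second and third terms. For nonsymmetric matrices one would first replace $A$ by $(A+A^\top)/2$, which leaves the quadratic form unchanged.

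The only step needing care is justifying the Isserlis formula for the index pattern $i=j=k=l$, where the three Kronecker terms must sum to $3 = \bb{E}[z^4]$. I would verify this explicitly rather than cite the general theorem.

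As a cross-check, one can diagonalize $A = Q\Lambda Q^\top$ and use the rotation invariance of $\bs{z}$. This reduces the problem to $\bb{E}\!\left[\sum_i \lambda_i y_i^2 \sum_{k,l} \tilde B_{kl} y_k y_l\right]$, which evaluates to $\sum_i \lambda_i(3\tilde B_{ii} + \sum_{k\ne i}\tilde B_{kk})$. This equals ${\rm tr}A\,{\rm tr}B + 2\sum_i\lambda_i \tilde B_{ii}$, and the last sum is ${\rm tr}(AB)$.
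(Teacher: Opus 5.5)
Your proof is correct. There is no in-paper argument to compare it with: the paper does not prove this lemma and simply cites Magnus (1978), where the identity is the two-factor special case of a general formula for expectations of products of Gaussian quadratic forms. You replace the citation with a self-contained Wick/Isserlis computation, and every step checks out:
\begin{itemize}
\item the identity $\bb{E}[z_iz_jz_kz_l]=\delta_{ij}\delta_{kl}+\delta_{ik}\delta_{jl}+\delta_{il}\delta_{jk}$ holds in every index pattern, including $3=\bb{E}[z_i^4]$, the two-distinct-pairs case, and the patterns where both sides vanish;
\item the three contractions correctly give $({\rm tr}A)({\rm tr}B)$, ${\rm tr}(AB^\top)$ and ${\rm tr}(AB)$;
\item the diagonalization cross-check via $\tilde B=Q^\top BQ$ is also right.
\end{itemize}
Your route makes it transparent that symmetry is needed only to identify ${\rm tr}(AB^\top)$ with ${\rm tr}(AB)$. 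The same contraction with $\Sigma_{ij}$ in place of $\delta_{ij}$ also gives the general-covariance version ${\rm tr}(A\Sigma)\,{\rm tr}(B\Sigma)+2\,{\rm tr}(A\Sigma B\Sigma)$ directly. The citation, by contrast, covers products of arbitrarily many quadratic forms, where enumerating Wick pairings by hand becomes unwieldy. The paper uses the lemma only once, in step (c) of the update-expectation lemma, with two symmetric matrices (one built from the preconditioned gradient, and $\Sigma_t^{-1}$), so your two-factor argument is all that is needed.
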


\begin{assumption}[Local $r$-effective rank, \cite{malladi2023fine}]\label{ass:ler}
Let $G(\bs{w}_t) := \max_{\mc{B}, |\mc{B}|=1}\|\nabla\mc{L}(\bs{w}_t, \mc{B})\|$. There is a matrix $\mc{H}(\bs{w}_t) \preceq L I_d$ satisfying: \begin{enumerate}
    \item For all $\bs{w}$ such that $\|\bs{w} - \bs{w}_t\|_2 \leq \eta d G(\bs{w}_t)$, it holds that $\nabla^2 \mc{L}(\bs{w}) \preceq H(\bs{w}_t)$.
    \item The effective rank of $H(\bs{w}_t)$, specifically, ${\rm tr}(\mc{H}(\bs{w}_t)) / \|\mc{H}(\bs{w}_t)\|_{op}$, is at most $r$.
\end{enumerate}
\end{assumption}


We present a convergence bound for non-convex optimization.

\begin{theorem}\label{thm:meanqua}
With a sufficiently small learning rate $\eta$, AdaMeZO converges to a stationary point with
    \begin{align}
        \bb{E}\left[\frac{1}{T}\sum_{t=1}^T \|\nabla \mc{L}(\bs{w}_t)\|_2^2\right] \leq \mc{O}\left(\frac{1}{\sqrt{T}}\right) + \mc{O}(\mu^2). \notag
    \end{align}
\end{theorem}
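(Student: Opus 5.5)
We follow the standard descent-lemma template for preconditioned zeroth-order methods, as in HiZOO \cite{zhao2024second}, and use \Cref{ass:fgd} to absorb the truncated-moment error. Write the AdaMeZO step as $\bs{w}_{t+1} = \bs{w}_t - \eta \Sigma_t^{-1/2} \bs{m}_t$, where $\Sigma_t = \mathrm{diag}(\bs{v}_t + \epsilon)$ is the truncated second-moment preconditioner. The block-wise state caching reproduces exactly the same $\bs{z}$'s as seed caching, so the algorithm is mathematically identical to an Adam-style update with $h$-truncated moments; we analyse that update directly. By \Cref{ass:bsm}, every entry of $\Sigma_t^{-1/2}$ lies in $[s_u^{-1/2}, s_l^{-1/2}]$, so the preconditioner is uniformly well conditioned.

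\textbf{Step 1 (descent inequality).} Apply \Cref{ass:lsmooth} to consecutive iterates:
\begin{align}
\mc{L}(\bs{w}_{t+1}) \le \mc{L}(\bs{w}_t) - \eta\langle \nabla\mc{L}(\bs{w}_t), \Sigma_t^{-1/2}\bs{m}_t\rangle + \frac{L\eta^2}{2}\|\Sigma_t^{-1/2}\bs{m}_t\|_2^2. \notag
\end{align}
Split the cross term as $\bs{m}_t = \nabla\mc{L}(\bs{w}_t) + (\bs{m}_t - \nabla\mc{L}(\bs{w}_t))$. The first piece gives $-\eta s_u^{-1/2}\|\nabla\mc{L}(\bs{w}_t)\|_2^2$. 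For the second, Young's inequality together with \Cref{ass:fgd} gives
\begin{align}
\frac{\eta}{2} s_u^{-1/2}\|\nabla\mc{L}(\bs{w}_t)\|_2^2 + \mc{O}\big(\eta (1-\beta_1)^2 L^2 \eta^2 s_l^{-1}\big). \notag
\end{align}
There is also a bias from SPSA. A Taylor expansion of the central difference shows $\bb{E}_{\bs{z}}[\bs{g}_t] = \nabla\mc{L}(\bs{w}_t,\mc{B}_t) + \mc{O}(\mu^2)$, since the third-order remainder is odd-symmetric. We assume smoothness of the next order here, or alternatively interpret $\mc{L}$ as its Gaussian smoothing. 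Combined with Young's inequality, this bias contributes $\mc{O}(\eta\mu^2)$.

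\textbf{Step 2 (second-order term).} We bound $\bb{E}\|\Sigma_t^{-1/2}\bs{m}_t\|_2^2 \le s_l^{-1}\bb{E}\|\bs{m}_t\|_2^2$. Since $\bs{m}_t$ is a convex combination, up to the $(1-\beta_1)$ normalisation, of at most $h$ SPSA estimates, Jensen's inequality reduces this to bounding $\bb{E}\|\bs{g}_\tau\|_2^2$. For each of these we use \Cref{lm:magnus} with $A = B = \nabla\mc{L}\nabla\mc{L}^\top$, which gives
\begin{align}
\bb{E}\|\bs{z}\bs{z}^\top\nabla\mc{L}\|^2 = (d+2)\|\nabla\mc{L}\|^2. \notag
\end{align}
With \Cref{ass:bgv} and the $\mu$-remainder, this yields $\bb{E}\|\bs{g}_\tau\|^2 \le (d+2)(\|\nabla\mc{L}(\bs{w}_\tau)\|^2 + \sigma_\tau^2) + \mc{O}(\mu^2)$. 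To avoid a bare $d$-dependence one can instead invoke \Cref{ass:ler}, following \cite{malladi2023fine}, replacing $d$ with roughly $r$ via $\mathrm{tr}(\mc{H})$. Either way, for fixed model size it is a constant. Lagged gradients $\nabla\mc{L}(\bs{w}_\tau)$ with $\tau \in [t-h, t]$ are related back to $\nabla\mc{L}(\bs{w}_t)$ again through \Cref{ass:fgd}, or through smoothness plus bounded steps, at $\mc{O}(\eta)$ cost.

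\textbf{Step 3 (telescoping and learning-rate choice).} Choose $\eta$ small enough that the $\eta^2 L (d+2) s_l^{-1}\|\nabla\mc{L}\|^2$ term is at most $\frac{\eta}{4}s_u^{-1/2}\|\nabla\mc{L}\|^2$; this is the ``sufficiently small learning rate'' in \Cref{thm:meanqua}. Then take total expectation, sum over $t = 1, \dots, T$, telescope $\mc{L}(\bs{w}_1) - \mc{L}^*$, and divide by $\eta T$. This gives
\begin{align}
\frac{1}{T}\sum_{t=1}^T\bb{E}\|\nabla\mc{L}(\bs{w}_t)\|^2 \lesssim \frac{\mc{L}(\bs{w}_1)-\mc{L}^*}{\eta T} + \eta\, C(\sigma^2, d, s_l, s_u, L) + \mc{O}(\mu^2). \notag
\end{align}
Setting $\eta \propto 1/\sqrt{T}$ balances the first two terms, so both are $\mc{O}(1/\sqrt{T})$, while the $\mc{O}(\mu^2)$ term survives. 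This is the claim.

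\textbf{Main obstacle.} The main difficulty is the cross term: $\Sigma_t$ depends on the same random directions $\bs{z}_\tau$ that form $\bs{m}_t$, so $\bb{E}[\Sigma_t^{-1/2}\bs{m}_t]$ does not factor. Our first attempt is to avoid taking expectations through the preconditioner altogether. We use \Cref{ass:fgd}, which is a deterministic pathwise bound, to replace $\bs{m}_t$ by $\nabla\mc{L}(\bs{w}_t)$ inside the inner product, and then use only the uniform spectral bounds on $\Sigma_t^{-1/2}$ from \Cref{ass:bsm}. This is precisely why those two assumptions are posed. If that is too loose, the fallback is to decouple by conditioning on $\bs{w}_{t-h}$ and bounding the dependence of $\Sigma_t$ on the fresh $\bs{z}_t$ as a higher-order $\mc{O}(\eta)$ correction.
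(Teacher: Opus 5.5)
Your route is the same one the paper takes. You use a descent step from \Cref{ass:lsmooth}, \Cref{ass:fgd} to replace the truncated momentum by $\nabla\mc{L}(\bs{w}_t)$, \Cref{ass:bsm} to sandwich the preconditioner, \Cref{lm:magnus} for the second moment, and telescoping with $\eta\propto T^{-1/2}$.

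\textbf{The missing warm-up phase.} This is the one real gap. \Cref{alg:adamezo} performs $T_w$ plain MeZO steps $\bs{w}_t\gets\bs{w}_t-\eta p_t\bs{z}_t$ before any moments exist. So your claim that the algorithm is ``mathematically identical to an Adam-style update with $h$-truncated moments'' is false for $t\le T_w$. The pathwise use of \Cref{ass:fgd} that Step 1 rests on is also unavailable there, since a single SPSA estimate is not $\mc{O}(\eta)$-close to the gradient. The paper handles this as follows:
\begin{enumerate}
\item It splits the sum at $T_w$.
\item It bounds the warm-up steps with the MeZO estimates of \Cref{ge}: unbiasedness up to $\mc{O}(\mu)$ and the effective-rank second-moment bound.
\item It chains the two telescoped sums on a common scale, using $s=\max\{1,s_u\}$.
\end{enumerate}
The tools in your Step 2 (the SPSA bias, the $(d+2)$ bound, \Cref{ass:bgv}) are exactly what that phase needs. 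The repair is routine, but it has to be done.

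\textbf{The coupling you flag as the main obstacle.} The paper does not resolve it either. It writes $\bb{E}[\bs{u}_t]=\Sigma_t^{-1}(\nabla\mc{L}(\bs{w}_t)+\mc{O}(\bar\beta_1L\eta))+\mc{O}(\mu)$, with the preconditioner pulled outside the expectation. It then applies \Cref{lm:magnus} with $B=\Sigma_t^{-1}$ to get the factor $2\,\mathrm{tr}(\Sigma_t^{-1})+4s_l^{-1}$. It also absorbs the linear cross term with the drift error without comment. Your pathwise Young argument is cleaner on both counts.

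Note, however, that once \Cref{ass:fgd} is read pathwise, Step 2 is superfluous after warm-up. You get $\|\Sigma_t^{-1/2}\bs{m}_t\|_2^2\le 2s_l^{-1}(\|\nabla\mc{L}(\bs{w}_t)\|_2^2+\mc{O}(\eta^2))$ directly, with no lagged gradients, no $\sigma$ and no $\mu$. Your fallback decoupling argument is then unnecessary. In effect, Step 2 belongs in the warm-up analysis you omitted.

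\textbf{Two small slips.}
\begin{enumerate}
\item The instance of \Cref{lm:magnus} that gives $\bb{E}\|\bs{z}\bs{z}^\top\bs{a}\|_2^2=(d+2)\|\bs{a}\|_2^2$ is $A=\bs{a}\bs{a}^\top$, $B=I_d$. Your choice $A=B=\bs{a}\bs{a}^\top$ yields $\bb{E}[(\bs{z}^\top\bs{a})^4]=3\|\bs{a}\|_2^4$ instead.
\item No third-order smoothness is needed for the SPSA bias. The $\mc{O}(\mu)$ bias available from \Cref{ass:lsmooth} alone already gives the $\mc{O}(\eta\mu^2)$ term after Young's inequality, and hence the $\mc{O}(\mu^2)$ in the final bound.
\end{enumerate}
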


Detailed proof can be found in Appendix \ref{sec:fullproof}. The bound recovers the structure from \cite{zhao2024second}. The above result shows that after $T = \mc{O}(\epsilon^{-2})$ steps, AdaMeZO converges to a small neighborhood of a stationary point satisfying $\bb{E}\left[\|\frac{1}{T}\sum_{t=1}^T\nabla\mc{L}(\bs{w}_t)\|_2^2\right] < \epsilon$.

\begin{table*}[h]
    \setlength{\tabcolsep}{5pt}
    \caption{Results on RoBERTa-large over language tasks with $k=16$.}
    \vspace{-10pt}
    \label{table:nlp_roberta_results_k16}
    \begin{center}
    \small
        \begin{tabular}{lccccccc}
            \toprule
            \multicolumn{1}{c}{Task}  &\multicolumn{1}{c}{\bf SST-2}  &\multicolumn{1}{c}{\bf SST-5}  &\multicolumn{1}{c}{\bf SNLI}  &\multicolumn{1}{c}{\bf MNLI} &\multicolumn{1}{c}{\bf RTE}  &\multicolumn{1}{c}{\bf TREC} & \textbf{Average} \\ 
            \multicolumn{1}{c}{Type} & \multicolumn{2}{c}{---- sentiment ----} & \multicolumn{3}{c}{- natural language inference -} & \multicolumn{1}{c}{-- topic --} \\
            \hline
            Zero-shot & 79.0 & 35.5 & 50.2 & 48.8 & 51.4 & 32.0 & 49.4\\
            \hline
            
            FO \footnotesize{($\geq 4 \times$ memory)} & 91.8 & 47.5 & 77.5 & 70.0 & 66.4 & 85.0 & 73.0\\
            \hdashline
            MeZO & 90.6&	44.1&	\textbf{67.3}&	58.1&	61.6& 67.3 & 64.8 \\
             & (1.4)&	(1.0)&	(3.1)&	(1.1)&	(1.3)& (2.7) & -- \\
            MeZO-switch & 90.6&	44.3&	\textbf{67.3}&	58.0&	61.6&	67.0 & 64.8 \\
             & (1.6)&	(1.6)&	(2.8)&	(1.3)&	(2.0)&	(4.7) & -- \\
            \textit{\textbf{AdaMeZO}} & \textbf{90.9}&	\textbf{45.2}&	66.8&	\textbf{58.6}&	\textbf{63.1}& \textbf{71.5} & \textbf{66.0} \\
             & (0.9)&	(2.0)&	(2.9)&	(1.4)&	(2.3)& (5.2) & -- \\
            
            \bottomrule
        \end{tabular}
    \end{center}
\end{table*}

\begin{table*}[h]
\setlength{\tabcolsep}{2pt}
\centering
    \caption{Main results on OPT-1.3B over language tasks. Avg (w.o S,D) indicates the average metric except SQuAD and DROP.}
    \vspace{-10pt}
    \setlength{\tabcolsep}{1pt} 
    \label{table:nlp_opt_results_opt_1_3b}
    \begin{center}
    \small
        \begin{tabular}{lccccccccccccc}
            \toprule
            \multicolumn{1}{c}{Task}  &\multicolumn{1}{c}{\bf SST-2}  &\multicolumn{1}{c}{\bf RTE}  &\multicolumn{1}{c}{\bf CB}  &\multicolumn{1}{c}{\bf BoolQ} &\multicolumn{1}{c}{\bf WSC}  &\multicolumn{1}{c}{\bf WIC} &\multicolumn{1}{c}{\bf MultiRC} &\multicolumn{1}{c}{\bf COPA}  &\multicolumn{1}{c}{\bf ReCoRD} &\multicolumn{1}{c}{\bf SQuAD}  &\multicolumn{1}{c}{\bf DROP} & \textbf{Avg}  & \textcolor{black}{\textbf{Avg (w.o S,D)}} \\ 
            \multicolumn{1}{c}{Type} & \multicolumn{7}{c}{------------------ classification ------------------} & \multicolumn{2}{c}{-- multiple choice --} & \multicolumn{2}{c}{--- generation ---} \\
            \hline
            Zero-shot & 53.5&	53.4&	39.2&	45.5&	43.2&	57.5&	45.4&	75.0&	70.5&	27.2&	11.1 & 47.4 & \textcolor{black}{53.6} \\
            \hline
            FO \footnotesize{($\geq 4 \times$ memory)} & 90.9&	64.0&	77.2&	64.4&	52.8&	62.3&	65.2&	74.0&	69.1&	80.4&	28.2 & 66.2 & \textcolor{black}{68.9} \\
            & (1.2)&	(10.7)&	(7.9)&	(9.3)&	(2.0)&	(1.9)&	(6.0)&	(2.9)&	(1.2)&	(1.5)&	(1.7) & -- & \textcolor{black}{--} \\
            \hdashline
            MeZO & 90.9&	52.5&	65.5&	61.8&	51.1&	\textbf{58.6}&	53.7&	74.5&	70.6&	73.3&	22.8 & 61.4 & \textcolor{black}{64.4} \\
             & (0.3)&	(1.5)&	(6.9)&	(2.1)&	(8.4)&	(1.4)&	(2.2)&	(3.6)&	(1.0)&	(0.2)&	(0.6) & -- & \textcolor{black}{--} \\
            MeZO-switch & 91.0&	53.8&	68.7&	61.9&	52.1&	58.3&	54.9&	\textbf{75.5}&	71.0&	73.7&	24.3 & 62.3 & \textcolor{black}{65.2} \\
             & (0.6)&	(1.6)&	(2.3)&	(0.6)&	(7.6)&	(1.6)&	(1.5)&	(3.6)&	(1.2)&	(1.2)&	(1.3) & -- & \textcolor{black}{--} \\

            \color{black} HiZOO &  90.9&	\textbf{54.5}&	63.3&	62.7&	49.4&	58.4&	55.4&	74.0&	70.8&	74.5&	24.5&	61.7&	64.4  \\
             & (1.0)&	(1.6)&	(8.5)&	(1.6)&	(6.9)&	(0.4)&	(1.7)&	(1.8)&	(0.8)&	(0.4)&	(0.5)& -- & --\\
             \normalcolor
             
            \textit{\textbf{AdaMeZO}} & \textbf{91.6}&	54.3&	\textbf{69.6}&	\textbf{63.2}&	\textbf{53.5}&	58.4&	\textbf{55.9}&	\textbf{75.5}&	\textbf{71.1}&	\textbf{76.1}&	\textbf{24.6} & \textbf{63.1} & \textcolor{black}{\textbf{65.9}} \\
             & (0.3)&	(3.1)&	(1.4)&	(1.6)&	(7.8)&	(1.6)&	(0.7)&	(4.0)&	(1.3)&	(0.7)&	(1.0) & -- & \textcolor{black}{--} \\
            \bottomrule
        \end{tabular}
    \end{center}
\end{table*}

\section{Experiment Results}\label{sec:empirical}

We present empirical results for AdaMeZO with its baselines in this section. Generally, there are two types of LLMs: 1) encoder-decoder, or masked language models (MLM), such as BERT \cite{devlin2019bert} and its variants, and 2) decoder-only, or autoregressive models (ARM), such as GPT, OPT, and LLaMA families. To comprehensively demonstrate AdaMeZO's performance, we first illustrate the optimization trajectories for toy functions. Then, we test AdaMeZO with baseline algorithms on well-recognized LLMs, including an MLM RoBERTa \cite{liu2019roberta}, and two ARMs, OPT \cite{zhang2022opt} and LlaMa\cite{touvron2023llama}. 

\subsection{Toy Functions}\label{sec:toy}

It is impractical to visualize trajectories of model optimization with billions of dimensions. However, we can illustrate the optimization trajectories on three 2-dimensional toy functions as in \Cref{fig:main} to show how AdaMeZO adapts to heterogeneous curvatures. We test the Adam optimizer \cite{kingma2014adam} implemented in PyTorch \cite{paszke2019pytorch}, the vanilla MeZO \cite{malladi2023fine}, and the proposed AdaMeZO. More details in \Cref{sec:toydetail}.

In general, we observe that AdaMeZO shares Adam's curvature adaptability. Although AdaMeZO walks longer paths due to stochastic gradient directions and warm-up steps, it moves swiftly in regions of low curvature thanks to the preconditioning provided by the diagonal Hessian estimator, and the final loss values are comparable to those of Adam. In contrast, MeZO struggles with oscillations in low-curvature regions, leading to worse convergence.

\subsection{Main Results}\label{sec:mlm}

We compare the performance of AdaMeZO with vanilla MeZO and MeZO-switch, a variant of MeZO in which the learning rate is manually adjusted to ensure its optimization trajectory is longer than that of AdaMeZO. This ensures that AdaMeZO’s outperformance is not due to MeZO's underfitting, but rather to its adaptability to the loss landscape. Additionally, we include HiZOO \cite{zhao2024second} as a strong baseline that maintains preconditioners in memory. We set $h=10, \beta_1=0.7, \beta_2 = 0.9$ to evaluate performance across 4 randomly sampled data subsets of the same size and report the mean and standard deviation of the corresponding metric for each task after a hyperparameter study presented in Appendix \ref{app:hyper}.

\begin{table}[t]
\caption{Memory profile (MB) on standard PyTorch build, measured on OPT-1.3b, batch size=1. Measured via {\tt nvidia-smi}.}
\label{table:memory_profile}
\centering 
\small
\begin{tabular}{lrrrr}
\toprule
Optimizer & \textbf{SST2} & \textbf{COPA} & \textbf{SQuAD} & Memory \\ \midrule
MeZO     & 5016 & 5058 & 5040 & 1x \\ \hline
Adam & 22172 & 21660 & 22688 & 4.40x \\ 
HiZOO    & 7532 & 7535 & 7396 & 1.49x \\
\textbf{\textit{AdaMeZO}}  & \textbf{5410} & \textbf{5452} & \textbf{5434} & \textbf{1.07x} \\ 
\bottomrule
\end{tabular}
\end{table}

\begin{table*}[h]
\setlength{\tabcolsep}{2pt}
\centering
    \caption{Main results on LLaMA3-3B over language tasks.}
    \vspace{-10pt}
    \setlength{\tabcolsep}{1pt} 
    \label{table:nlp_opt_results_llama3b}
    \centering
    \begin{center}
    \small
        \begin{tabular}{lccccccccccccc}
            \toprule
            \multicolumn{1}{c}{Task}  &\multicolumn{1}{c}{\bf SST-2}  &\multicolumn{1}{c}{\bf RTE}  &\multicolumn{1}{c}{\bf CB}  &\multicolumn{1}{c}{\bf BoolQ} &\multicolumn{1}{c}{\bf WSC}  &\multicolumn{1}{c}{\bf WIC} &\multicolumn{1}{c}{\bf MultiRC} &\multicolumn{1}{c}{\bf COPA}  &\multicolumn{1}{c}{\bf ReCoRD} &\multicolumn{1}{c}{\bf SQuAD}  &\multicolumn{1}{c}{\bf DROP} & \textbf{Avg}  & \textcolor{black}{\textbf{Avg (w.o S,D)}} \\ 
            \multicolumn{1}{c}{Type} & \multicolumn{7}{c}{------------------ classification ------------------} & \multicolumn{2}{c}{-- multiple choice --} & \multicolumn{2}{c}{--- generation ---} \\
            \hline
            Zero-shot & 56.0&	52.7&	51.6&	60.9&	36.5&	54.3&	44.8&	75.0&	68.2&	47.3&	20.8 & 51.6 & \textcolor{black}{55.5} \\
            \hline
            FO \footnotesize{($\geq 4 \times$ memory)} & 92.5&	73.9&	85.6&	65.9&	57.8&	67.1&	70.6&	75.7&	68.6&	83.9&	32.2 & 70.3 & \textcolor{black}{73.1} \\
             & (0.7)&	(5.4)&	(6.9)&	(7.3)&	(7.6)&	(0.7)&	(1.8)&	(2.6)&	(1.0)&	(0.3)&	(1.8) & -- & \textcolor{black}{--} \\
            \hdashline
            MeZO & 84.5&	53.2&	64.7&	62.6&	50.4&	54.6&	52.6&	77.2&	70.0&	79.2&	26.8 & 61.4 & \textcolor{black}{63.3} \\
             & (4.9)&	(0.7)&	(2.6)&	(0.7)&	(11.3)&	(0.3)&	(2.5)&	(2.0)&	(0.4)&	(0.9)&	(0.5) & -- & \textcolor{black}{--} \\
            MeZO-switch & 86.6&	54.1&	65.5&	63.2&	51.6&	54.7&	54.7&	78.7&	70.4&	\textbf{80.4}&	27.6 & 62.5 & \textcolor{black}{64.4} \\
             & (4.5)&	(1.5)&	(0.9)&	(0.3)&	(12.2)&	(1.0)&	(0.6)&	(2.2)&	(0.6)&	(0.9)&	(0.6) & -- & \textcolor{black}{--} \\

            \color{black} HiZOO &  92.2&	54.1&	65.1&	63.7&	52.8&	\textbf{54.9}&	56.5&	\textbf{82.5}&	\textbf{71.5}&	18.5&	6.1&	56.1&	65.9  \\
             & (0.5)&	(0.3)&	(2.2)&	(0.3)&	(5.4)&	(1.7)&	(0.7)&	(0.5)&	(0.6)&	(1.9)&	(1.2) & -- & --\\
             \normalcolor
             
            \textit{\textbf{AdaMeZO}} & \textbf{92.6}&	\textbf{54.4}&	\textbf{66.0}&	\textbf{64.6}&	\textbf{54.5}&	\textbf{54.9}&	\textbf{56.9}&	81.2&	71.3&	\textbf{80.4}&	\textbf{28.1} & \textbf{64.1} & \textcolor{black}{\textbf{66.3}} \\
             & (0.5)&	(1.5)&	(1.4)&	(2.6)&	(7.5)&	(1.6)&	(1.0)&	(3.2)&	(0.9)&	(1.8)&	(1.1) & -- & \textcolor{black}{--} \\
            \bottomrule
        \end{tabular}
    \end{center}
\end{table*}

Consistent with previous research \cite{malladi2023fine}, we conduct experiments on RoBERTa-large 350M on three types of NLP tasks: sentiment, natural language inference, and topic. We sample $k=16$ examples per class to demonstrate training performance in a few-shot scenario (see Table~\ref{table:nlp_roberta_results_k16}). It is found that:


{\bf AdaMeZO yields better performance.} Averaged across all tasks, AdaMeZO achieves a \textbf{1.2\%} absolute accuracy improvement to MeZO on average, with particularly strong gains in tasks like RTE (\textbf{1.5\%}), TREC (\textbf{4.2\%}).



Then we extend our investigation to two autoregressive architectures: OPT (Table~\ref{table:nlp_opt_results_opt_1_3b}) and LLaMA3 (Table~\ref{table:nlp_opt_results_llama3b}). Experimental results show that:

{\bf AdaMeZO's superior performance scales up to billion-level LLMs.} On OPT-1.3B, AdaMeZO surpasses MeZO and MeZO-switch in all but one task. AdaMeZO achieves a \textbf{1.7\%} absolute accuracy improvement to MeZO on average, with particularly strong gains in tasks like CB (\textbf{4.1\%}), SQuAD (\textbf{2.8\%}), WSC (\textbf{2.4\%}). AdaMeZO also outperforms HiZOO with an average lead of \textbf{1.5\%}. For LLaMA-3B, AdaMeZO further extends its lead, achieves a \textbf{2.7\%} absolute accuracy improvement to MeZO on average, with particularly strong gains in tasks like SST2 (\textbf{8.1\%}), MultiRC (\textbf{4.3\%}), WSC (\textbf{4.1\%}), COPA (\textbf{4.0\%}). AdaMeZO demonstrates its scalability and maintains its advantage in modern-scale models like OPT-30B, as reported in \Cref{table:nlp_opt_results_opt30b}. We defer results of larger scale to Appendix \ref{sec:larger}.

\begin{table}[t]
\setlength{\tabcolsep}{2pt}
\centering
    \caption{\textcolor{black}{Main results on OPT-30B over language tasks, with prefix-tuning.}}
    \vspace{-10pt}
    \setlength{\tabcolsep}{6pt} 
    \fontsize{10pt}{12pt}\selectfont 
    \label{table:nlp_opt_results_opt30b}
    \centering
    \begin{center}
    \small
        \color{black}
        \begin{tabular}{lccccc}
            \toprule
            \multicolumn{1}{c}{Task}  &\multicolumn{1}{c}{\bf SST-2}  &\multicolumn{1}{c}{\bf WSC}  &\multicolumn{1}{c}{\bf WIC} &\multicolumn{1}{c}{\bf COPA} & \textbf{Avg} \\ 
            \hline
            Zero-shot & 56.6& 38.4&	50.1& 81.0& 56.5 \\
            \hline
            HiZOO & 90.1 &56.9&	55.2& 86.2& 72.1 \\
             & (1.1) &(6.2)&	(3.6) & (1.7) &-- \\
            \textit{\textbf{AdaMeZO}} & \textbf{91.1} & \textbf{57.6}&	\textbf{57.3}& \textbf{87.0} & \textbf{73.2}\\
             & (0.6) &(2.4)&	(1.5)&(1.4)& -- \\
            \bottomrule
        \end{tabular}
        \normalcolor
    \end{center}
\end{table}

\subsection{Memory Efficiency}

AdaMeZO incurs a small additional memory due to block-wise moment caching as reported in \Cref{table:memory_profile}. We can observe that, compared to optimizers that maintain actual moments, the additional memory cost is significantly reduced.

\subsection{Wall-clock Time Analysis}

AdaMeZO incurs longer per-step runtime compared to MeZO, mainly due to a) the additional PRNG calls for past gradient regeneration, and b) the weighted gradient accumulation for moment recovery. We report a runtime profile as \Cref{table:runtime_profile}. We observe that the main contributor to AdaMeZO's additional runtime is the accumulation of past gradients that are regenerated. Optimizing this accumulation process or using prefix tuning will narrow the speed gap relative to MeZO.



\begin{table}[t]
\caption{Runtime profile (sec/step) on standard PyTorch build, measured on OPT-1.3b, batch size=1.}
\label{table:runtime_profile}
\centering
\small
\begin{tabular}{@{}lrrr@{}}
\toprule
Optimizer                                               & \textbf{SST2} & \textbf{COPA} & \textbf{SQuAD} \\ \midrule
MeZO                                                             & 0.21 & 0.18 & 0.21  \\
HiZOO                                                            & 0.23 & 0.24 & 0.25  \\
MeZO + a.                                                        & 0.23 & 0.23 & 0.24  \\
\begin{tabular}[c]{@{}l@{}}MeZO + a. + b. (AdaMeZO)\end{tabular} & 0.31 & 0.30 & 0.31  \\ 
\textcolor{black}{Adam} & \textcolor{black}{0.12} & \textcolor{black}{0.13} & \textcolor{black}{0.13} \\ 
\bottomrule
\end{tabular}
\end{table}


\section{Conclusion, Limitations and Future Works}\label{sec:limitations}

In this work, we introduce AdaMeZO, the first ZO optimizer that incorporates Adam-style first- and second-moment updates without doubling or tripling the memory requirements of the original MeZO. This is achieved by estimating truncated moments and performing more refined operations on PRNGs. We provide theoretical analysis and empirical evaluations. Visualizations show that AdaMeZO adapts to complex loss landscapes without consuming excessive additional memory. Experiments on well-recognized models show that AdaMeZO reaches on-par performance using fewer forward passes and can continue to lower loss values before reaching identical terminal conditions. The paper's limitations are as follows. 

\textcolor{black}{We have captured the gradient drift across different steps using a big $\mathcal{O}$ constant related to L-smoothness, EMA weight $\beta_1$, and learning rate.} Although finite moment horizons may help to keep the estimations less biased, we did not attempt to explicitly capture the gap, which is a future research direction. 

AdaMeZO estimates second moments at a small cost, but they are inaccurate. The reason is two-fold: 1) AdaMeZO runs on zeroth-order gradient estimations, and 2) a smaller $\beta_2$ to guarantee that the discarded part contributes only a small share. Future investigations into more accurate second-moment estimations could improve performance.

\section*{Impact Statement}

This paper presents work whose goal is to advance the field of Machine
Learning. There are many potential societal consequences of our work, none
which we feel must be specifically highlighted here.

\bibliography{icml2026_conference}

@article{zhao2024helene,
  title={HELENE: Hessian Layer-wise Clipping and Gradient Annealing for Accelerating Fine-tuning LLM with Zeroth-order Optimization},
  author={Zhao, Huaqin and Li, Jiaxi and Pan, Yi and Liang, Shizhe and Yang, Xiaofeng and Liu, Wei and Li, Xiang and Dou, Fei and Liu, Tianming and Lu, Jin},
  journal={arXiv preprint arXiv:2411.10696},
  year={2024}
}

@article{kingma2014adam,
  title={Adam: A method for stochastic optimization},
  author={Kingma, Diederik P and Ba, Jimmy},
  journal={arXiv preprint arXiv:1412.6980},
  year={2014}
}

@article{malladi2023fine,
  title={Fine-tuning language models with just forward passes},
  author={Malladi, Sadhika and Gao, Tianyu and Nichani, Eshaan and Damian, Alex and Lee, Jason D and Chen, Danqi and Arora, Sanjeev},
  journal={Advances in Neural Information Processing Systems},
  volume={36},
  pages={53038--53075},
  year={2023}
}

@article{matsumoto1998mersenne,
  title={Mersenne twister: a 623-dimensionally equidistributed uniform pseudo-random number generator},
  author={Matsumoto, Makoto and Nishimura, Takuji},
  journal={ACM Transactions on Modeling and Computer Simulation (TOMACS)},
  volume={8},
  number={1},
  pages={3--30},
  year={1998},
  publisher={ACM New York, NY, USA}
}

@inproceedings{salmon2011parallel,
  title={Parallel random numbers: as easy as 1, 2, 3},
  author={Salmon, John K and Moraes, Mark A and Dror, Ron O and Shaw, David E},
  booktitle={Proceedings of 2011 international conference for high performance computing, networking, storage and analysis},
  pages={1--12},
  year={2011}
}

@article{zhao2024second,
  title={Second-order fine-tuning without pain for llms: A hessian informed zeroth-order optimizer},
  author={Zhao, Yanjun and Dang, Sizhe and Ye, Haishan and Dai, Guang and Qian, Yi and Tsang, Ivor W},
  journal={arXiv preprint arXiv:2402.15173},
  year={2024}
}

@article{spall1992multivariate,
  title={Multivariate stochastic approximation using a simultaneous perturbation gradient approximation},
  author={Spall, James C},
  journal={IEEE transactions on automatic control},
  volume={37},
  number={3},
  pages={332--341},
  year={1992},
  publisher={IEEE}
}

@article{das2024towards,
  title={Towards quantifying the preconditioning effect of adam},
  author={Das, Rudrajit and Agarwal, Naman and Sanghavi, Sujay and Dhillon, Inderjit S},
  journal={arXiv preprint arXiv:2402.07114},
  year={2024}
}

@inproceedings{ghorbani2019investigation,
  title={An investigation into neural net optimization via hessian eigenvalue density},
  author={Ghorbani, Behrooz and Krishnan, Shankar and Xiao, Ying},
  booktitle={International Conference on Machine Learning},
  pages={2232--2241},
  year={2019},
  organization={PMLR}
}

@article{liu2019roberta,
  title={Roberta: A robustly optimized bert pretraining approach},
  author={Liu, Yinhan and Ott, Myle and Goyal, Naman and Du, Jingfei and Joshi, Mandar and Chen, Danqi and Levy, Omer and Lewis, Mike and Zettlemoyer, Luke and Stoyanov, Veselin},
  journal={arXiv preprint arXiv:1907.11692},
  year={2019}
}

@article{zhang2022opt,
  title={Opt: Open pre-trained transformer language models},
  author={Zhang, Susan and Roller, Stephen and Goyal, Naman and Artetxe, Mikel and Chen, Moya and Chen, Shuohui and Dewan, Christopher and Diab, Mona and Li, Xian and Lin, Xi Victoria and others},
  journal={arXiv preprint arXiv:2205.01068},
  year={2022}
}

@article{touvron2023llama,
  title={Llama: Open and efficient foundation language models},
  author={Touvron, Hugo and Lavril, Thibaut and Izacard, Gautier and Martinet, Xavier and Lachaux, Marie-Anne and Lacroix, Timoth{\'e}e and Rozi{\`e}re, Baptiste and Goyal, Naman and Hambro, Eric and Azhar, Faisal and others},
  journal={arXiv preprint arXiv:2302.13971},
  year={2023}
}

@article{paszke2019pytorch,
  title={Pytorch: An imperative style, high-performance deep learning library},
  author={Paszke, A},
  journal={arXiv preprint arXiv:1912.01703},
  year={2019}
}

@book{beale1958iterative,
  title={On an iterative method for finding a local minimum of a function of more than one variable},
  author={Beale, Evelyn Martin Lansdowne},
  number={25},
  year={1958},
  publisher={Statistical Techniques Research Group, Section of Mathematical Statistics, Department of Mathematics, Princeton University}
}

@article{hu2022lora,
  title={Lora: Low-rank adaptation of large language models.},
  author={Hu, Edward J and Shen, Yelong and Wallis, Phillip and Allen-Zhu, Zeyuan and Li, Yuanzhi and Wang, Shean and Wang, Lu and Chen, Weizhu and others},
  journal={ICLR},
  volume={1},
  number={2},
  pages={3},
  year={2022}
}

@article{li2021prefix,
  title={Prefix-tuning: Optimizing continuous prompts for generation},
  author={Li, Xiang Lisa and Liang, Percy},
  journal={arXiv preprint arXiv:2101.00190},
  year={2021}
}

@article{lester2021power,
  title={The power of scale for parameter-efficient prompt tuning},
  author={Lester, Brian and Al-Rfou, Rami and Constant, Noah},
  journal={arXiv preprint arXiv:2104.08691},
  year={2021}
}

@article{dettmers2023qlora,
  title={Qlora: Efficient finetuning of quantized llms, 2023},
  author={Dettmers, Tim and Pagnoni, Artidoro and Holtzman, Ari and Zettlemoyer, Luke},
  journal={URL https://arxiv. org/abs/2305.14314},
  volume={2},
  year={2023}
}

@article{pan2024lisa,
  title={LISA: layerwise importance sampling for memory-efficient large language model fine-tuning},
  author={Pan, Rui and Liu, Xiang and Diao, Shizhe and Pi, Renjie and Zhang, Jipeng and Han, Chi and Zhang, Tong},
  journal={Advances in Neural Information Processing Systems},
  volume={37},
  pages={57018--57049},
  year={2024}
}

@article{rumelhart1986learning,
  title={Learning representations by back-propagating errors},
  author={Rumelhart, David E and Hinton, Geoffrey E and Williams, Ronald J},
  journal={nature},
  volume={323},
  number={6088},
  pages={533--536},
  year={1986},
  publisher={Nature Publishing Group UK London}
}

@article{bottou2018optimization,
  title={Optimization methods for large-scale machine learning},
  author={Bottou, L{\'e}on and Curtis, Frank E and Nocedal, Jorge},
  journal={SIAM review},
  volume={60},
  number={2},
  pages={223--311},
  year={2018},
  publisher={SIAM}
}

@article{loshchilov2017decoupled,
  title={Decoupled weight decay regularization},
  author={Loshchilov, Ilya and Hutter, Frank},
  journal={arXiv preprint arXiv:1711.05101},
  year={2017}
}

@article{sagun2016eigenvalues,
  title={Eigenvalues of the hessian in deep learning: Singularity and beyond},
  author={Sagun, Levent and Bottou, Leon and LeCun, Yann},
  journal={arXiv preprint arXiv:1611.07476},
  year={2016}
}

@inproceedings{zhang2023eva,
  title={Eva: Practical second-order optimization with kronecker-vectorized approximation},
  author={Zhang, Lin and Shi, Shaohuai and Li, Bo},
  booktitle={The Eleventh International Conference on Learning Representations},
  year={2023}
}

@article{yumemory,
  title={Memory-Efficient Block Coordinate Descent for Hessian-Informed Zeroth-Order Optimizer},
  author={Yu, Zhiyuan and Cheng, Yifei and Ding, Liang and Tian, Xinmei and Shen, Li and Tao, Dacheng}
}

@inproceedings{jiang2024zo,
  title={Zo-adamu optimizer: Adapting perturbation by the momentum and uncertainty in zeroth-order optimization},
  author={Jiang, Shuoran and Chen, Qingcai and Pan, Youcheng and Xiang, Yang and Lin, Yukang and Wu, Xiangping and Liu, Chuanyi and Song, Xiaobao},
  booktitle={Proceedings of the AAAI Conference on Artificial Intelligence},
  volume={38},
  number={16},
  pages={18363--18371},
  year={2024}
}

@article{you2019large,
  title={Large batch optimization for deep learning: Training bert in 76 minutes},
  author={You, Yang and Li, Jing and Reddi, Sashank and Hseu, Jonathan and Kumar, Sanjiv and Bhojanapalli, Srinadh and Song, Xiaodan and Demmel, James and Keutzer, Kurt and Hsieh, Cho-Jui},
  journal={arXiv preprint arXiv:1904.00962},
  year={2019}
}

@inproceedings{shazeer2018adafactor,
  title={Adafactor: Adaptive learning rates with sublinear memory cost},
  author={Shazeer, Noam and Stern, Mitchell},
  booktitle={International Conference on Machine Learning},
  pages={4596--4604},
  year={2018},
  organization={PMLR}
}

@article{zhuang2020adabelief,
  title={Adabelief optimizer: Adapting stepsizes by the belief in observed gradients},
  author={Zhuang, Juntang and Tang, Tommy and Ding, Yifan and Tatikonda, Sekhar C and Dvornek, Nicha and Papademetris, Xenophon and Duncan, James},
  journal={Advances in neural information processing systems},
  volume={33},
  pages={18795--18806},
  year={2020}
}

@inproceedings{chen2022evolved,
  title={Evolved optimizer for vision},
  author={Chen, Xiangning and Liang, Chen and Huang, Da and Real, Esteban and Liu, Yao and Wang, Kaiyuan and Hsieh, Cho-Jui and Lu, Yifeng and Le, Quoc V},
  booktitle={First Conference on Automated Machine Learning (Late-Breaking Workshop)},
  year={2022}
}

@article{luo2019adaptive,
  title={Adaptive gradient methods with dynamic bound of learning rate},
  author={Luo, Liangchen and Xiong, Yuanhao and Liu, Yan and Sun, Xu},
  journal={arXiv preprint arXiv:1902.09843},
  year={2019}
}

@article{liu2019variance,
  title={On the variance of the adaptive learning rate and beyond},
  author={Liu, Liyuan and Jiang, Haoming and He, Pengcheng and Chen, Weizhu and Liu, Xiaodong and Gao, Jianfeng and Han, Jiawei},
  journal={arXiv preprint arXiv:1908.03265},
  year={2019}
}

@article{zhang2024transformers,
  title={Why transformers need adam: A hessian perspective},
  author={Zhang, Yushun and Chen, Congliang and Ding, Tian and Li, Ziniu and Sun, Ruoyu and Luo, Zhiquan},
  journal={Advances in Neural Information Processing Systems},
  volume={37},
  pages={131786--131823},
  year={2024}
}

@article{elsayed2024revisiting,
  title={Revisiting scalable hessian diagonal approximations for applications in reinforcement learning},
  author={Elsayed, Mohamed and Farrahi, Homayoon and Dangel, Felix and Mahmood, A Rupam},
  journal={arXiv preprint arXiv:2406.03276},
  year={2024}
}

@inproceedings{gui2021laplace,
  title={Laplace ap-proximation with diagonalized hessian for over-parameterized neural networks},
  author={Gui, Ming and Zhao, Ziqing and Qiu, Tianming and Shen, Hao},
  booktitle={NeurIPS Workshop on Bayesian Deep Learning},
  year={2021}
}

@misc{baines2021fairscale,
  title={Fairscale: A general purpose modular pytorch library for high performance and large scale training},
  author={Baines, Mandeep and Bhosale, Shruti and Caggiano, Vittorio and Goyal, Naman and Goyal, Siddharth and Ott, Myle and Lefaudeux, Benjamin and Liptchinsky, Vitaliy and Rabbat, Mike and Sheiffer, Sam and others},
  year={2021}
}

@article{vakhitov2009algorithm,
  title={Algorithm for stochastic approximation with trial input perturbation in the nonstationary problem of optimization},
  author={Vakhitov, Alexander Timurovich and Granichin, Oleg Nikolaevich and Gurevich, Lev Stanislavovich},
  journal={Automation and Remote Control},
  volume={70},
  pages={1827--1835},
  year={2009},
  publisher={Springer}
}

@article{spall1997one,
  title={A one-measurement form of simultaneous perturbation stochastic approximation},
  author={Spall, James C},
  journal={Automatica},
  volume={33},
  number={1},
  pages={109--112},
  year={1997},
  publisher={Elsevier}
}

@article{jamieson2012query,
  title={Query complexity of derivative-free optimization},
  author={Jamieson, Kevin G and Nowak, Robert and Recht, Ben},
  journal={Advances in Neural Information Processing Systems},
  volume={25},
  year={2012}
}

@article{agarwal2009information,
  title={Information-theoretic lower bounds on the oracle complexity of convex optimization},
  author={Agarwal, Alekh and Wainwright, Martin J and Bartlett, Peter and Ravikumar, Pradeep},
  journal={Advances in Neural Information Processing Systems},
  volume={22},
  year={2009}
}

@article{raginsky2011information,
  title={Information-based complexity, feedback and dynamics in convex programming},
  author={Raginsky, Maxim and Rakhlin, Alexander},
  journal={IEEE Transactions on Information Theory},
  volume={57},
  number={10},
  pages={7036--7056},
  year={2011},
  publisher={IEEE}
}

@article{wang2020zeroth,
  title={Zeroth-order algorithms for nonconvex minimax problems with improved complexities},
  author={Wang, Zhongruo and Balasubramanian, Krishnakumar and Ma, Shiqian and Razaviyayn, Meisam},
  journal={arXiv preprint arXiv:2001.07819},
  year={2020}
}

@article{liu2024sparse,
  title={Sparse mezo: Less parameters for better performance in zeroth-order llm fine-tuning},
  author={Liu, Yong and Zhu, Zirui and Gong, Chaoyu and Cheng, Minhao and Hsieh, Cho-Jui and You, Yang},
  journal={arXiv preprint arXiv:2402.15751},
  year={2024}
}

@article{guo2024zeroth,
  title={Zeroth-order fine-tuning of llms with extreme sparsity},
  author={Guo, Wentao and Long, Jikai and Zeng, Yimeng and Liu, Zirui and Yang, Xinyu and Ran, Yide and Gardner, Jacob R and Bastani, Osbert and De Sa, Christopher and Yu, Xiaodong and others},
  journal={arXiv preprint arXiv:2406.02913},
  year={2024}
}

@article{chen2024enhancing,
  title={Enhancing zeroth-order fine-tuning for language models with low-rank structures},
  author={Chen, Yiming and Zhang, Yuan and Cao, Liyuan and Yuan, Kun and Wen, Zaiwen},
  journal={arXiv preprint arXiv:2410.07698},
  year={2024}
}

@article{chen2025towards,
  title={Towards Efficient Low-order Hybrid Optimizer for Language Model Fine-tuning},
  author={Chen, Minping and Huang, You-Liang and Wen, Zeyi},
  year={2025}
}

@article{tan2025harmony,
  title={Harmony in Divergence: Towards Fast, Accurate, and Memory-efficient Zeroth-order LLM Fine-tuning},
  author={Tan, Qitao and Liu, Jun and Zhan, Zheng and Ding, Caiwei and Wang, Yanzhi and Lu, Jin and Yuan, Geng},
  journal={arXiv preprint arXiv:2502.03304},
  year={2025}
}

@article{chen2025memory,
  title={A Memory Efficient Randomized Subspace Optimization Method for Training Large Language Models},
  author={Chen, Yiming and Zhang, Yuan and Liu, Yin and Yuan, Kun and Wen, Zaiwen},
  journal={arXiv preprint arXiv:2502.07222},
  year={2025}
}

@article{sun2025tezo,
  title={TeZO: Empowering the Low-Rankness on the Temporal Dimension in the Zeroth-Order Optimization for Fine-tuning LLMs},
  author={Sun, Yan and Huang, Tiansheng and Ding, Liang and Shen, Li and Tao, Dacheng},
  journal={arXiv preprint arXiv:2501.19057},
  year={2025}
}

@book{magnus1978moments,
  title={The moments of products of quadratic forms in normal variables},
  author={Magnus, Jan R and others},
  year={1978},
  publisher={Univ., Instituut voor Actuariaat en Econometrie}
}

@article{zhang2022adam,
  title={Adam can converge without any modification on update rules},
  author={Zhang, Yushun and Chen, Congliang and Shi, Naichen and Sun, Ruoyu and Luo, Zhi-Quan},
  journal={Advances in neural information processing systems},
  volume={35},
  pages={28386--28399},
  year={2022}
}

@article{tang2023zeroth,
  title={Zeroth-order optimization meets human feedback: Provable learning via ranking oracles},
  author={Tang, Zhiwei and Rybin, Dmitry and Chang, Tsung-Hui},
  journal={arXiv preprint arXiv:2303.03751},
  year={2023}
}

@inproceedings{devlin2019bert,
  title={Bert: Pre-training of deep bidirectional transformers for language understanding},
  author={Devlin, Jacob and Chang, Ming-Wei and Lee, Kenton and Toutanova, Kristina},
  booktitle={Proceedings of the 2019 conference of the North American chapter of the association for computational linguistics: human language technologies, volume 1 (long and short papers)},
  pages={4171--4186},
  year={2019}
}

@article{zhang2024adam,
  title={Adam-mini: Use fewer learning rates to gain more},
  author={Zhang, Yushun and Chen, Congliang and Li, Ziniu and Ding, Tian and Wu, Chenwei and Kingma, Diederik P and Ye, Yinyu and Luo, Zhi-Quan and Sun, Ruoyu},
  journal={arXiv preprint arXiv:2406.16793},
  year={2024}
}

@article{zhang2024revisiting,
  title={Revisiting zeroth-order optimization for memory-efficient llm fine-tuning: A benchmark},
  author={Zhang, Yihua and Li, Pingzhi and Hong, Junyuan and Li, Jiaxiang and Zhang, Yimeng and Zheng, Wenqing and Chen, Pin-Yu and Lee, Jason D and Yin, Wotao and Hong, Mingyi and others},
  journal={arXiv preprint arXiv:2402.11592},
  year={2024}
}

@article{chen2023deepzero,
  title={Deepzero: Scaling up zeroth-order optimization for deep model training},
  author={Chen, Aochuan and Zhang, Yimeng and Jia, Jinghan and Diffenderfer, James and Liu, Jiancheng and Parasyris, Konstantinos and Zhang, Yihua and Zhang, Zheng and Kailkhura, Bhavya and Liu, Sijia},
  journal={arXiv preprint arXiv:2310.02025},
  year={2023}
}

@article{pethick2025training,
  title={Training deep learning models with norm-constrained lmos},
  author={Pethick, Thomas and Xie, Wanyun and Antonakopoulos, Kimon and Zhu, Zhenyu and Silveti-Falls, Antonio and Cevher, Volkan},
  journal={arXiv preprint arXiv:2502.07529},
  year={2025}
}

@article{defazio2024road,
  title={The road less scheduled},
  author={Defazio, Aaron and Yang, Xingyu and Mehta, Harsh and Mishchenko, Konstantin and Khaled, Ahmed and Cutkosky, Ashok},
  journal={Advances in Neural Information Processing Systems},
  volume={37},
  pages={9974--10007},
  year={2024}
}

@article{kunstner2023noise,
  title={Noise is not the main factor behind the gap between sgd and adam on transformers, but sign descent might be},
  author={Kunstner, Frederik and Chen, Jacques and Lavington, Jonathan Wilder and Schmidt, Mark},
  journal={arXiv preprint arXiv:2304.13960},
  year={2023}
}
\bibliographystyle{icml2026}

\newpage
\appendix
\onecolumn

\section{Additional Related Works}\label{sec:MeZOplus}


In addition to MeZO, numerous subsequent excellent works have emerged to enhance the vanilla version. \cite{jiang2024zo} incorporates uncertain moments estimations to promote convergence. \cite{zhao2024helene} invokes Adam-style update rules for better performance. \cite{zhao2024second} estimates the diagonal Hessian with a three-point second derivative estimation, admitting a third forward pass for each step. \cite{liu2024sparse, guo2024zeroth} proposed to insert sparsity for better performance. \cite{chen2024enhancing, sun2025tezo} exploits the low-rank property for better performance. \cite{chen2025towards} proposes a hybrid optimizer that balances efficiency and trade-offs. \cite{tan2025harmony} explores a layer-wise adaptation to speed up zeroth-order fine-tuning. \cite{chen2025memory} investigates memory-efficient zeroth-order fine-tuning from a subspace-optimization perspective. \cite{yumemory} introduces a block version of HiZOO, attempting to preserve preconditioning-improved convergence while reducing additional memory access. 

\section{Detailed Experiment Settings}



\subsection{Computation Resources}\label{appendix:computation_resources}

We summarize the computational devices for empirical evaluations in \Cref{table:compuational}. We use device 1 for MLM experiments and device 2 for ARM experiments. 








\begin{table}[h]
\caption{Summary of computational devices for empirical evaluations.}
\label{table:compuational}
\centering
\setlength{\tabcolsep}{2.5pt}
\small
\begin{tabular}{@{}cccccc@{}}
\toprule
Device                              & OS/CPU/GPU                                     & Python                   & PyTorch                & CUDA                  & cuDNN                \\ \midrule
\multirow{3}{*}{1}  & Linux 5.10.0, amd64                   & \multirow{3}{*}{3.10.13} & \multirow{3}{*}{2.3.0} & \multirow{3}{*}{12.1} & \multirow{3}{*}{8.9} \\
                                  & Intel(R) Xeon(R) Gold 6133 CPU @ 2.50GHz       &                          &                        &                       &                      \\
                                  & 6x NVIDIA GeForce RTX 3090 GPU                 &                          &                        &                       &                      \\ \midrule
\multirow{3}{*}{2}  & Linux 4.18.0, x86\_64                     & \multirow{3}{*}{3.9.7}  & \multirow{3}{*}{2.1.0} & \multirow{3}{*}{12.1} & \multirow{3}{*}{8.9} \\
                                  & AMD EPYC 7742 64-Core Processor                &                          &                        &                       &                      \\
                                  & 4x NVIDIA A100-SXM4-80GB                       &                          &                        &                       &   
                                  \\ \bottomrule
\end{tabular}
\end{table}

\subsection{Formal Pseudo-codes for AdaMeZO}

A formal description of AdaMeZO in pseudo-code is as \Cref{alg:adamezo}.

\subsection{Hyperparameter Study}\label{app:hyper}

There are $3$ new hyperparameters introduced in AdaMeZO, the horizon $h$, and the EMA weight in momentum update $(\beta_1, \beta_2)$ as in Adam. 

We first analyze how different $h$ impacts the evaluation loss curve, as shown in \Cref{fig:differenth}. We find that setting $h > 5$ admits comparable results, while $h<5$ requires further optimization to avoid crash due to numerical instability. We set $h=10$ in experiments for stability, and set $h=5$ for wall-clock time analysis.

Similar to Adam, two new hyperparameters $(\beta_1, \beta_2)$ are introduced into the algorithm. We report AdaMeZO's performance across different hyperparameter settings, as shown in \Cref{table:beta}, on OPT-1.3B with the SST2 task, since complete task sweeps are computationally expensive. It can be observed that the first moments improve MeZO's performance, and the second moments further improve it. The performance gain is robust against reasonable choices of the hyperparameter $(\beta_1, \beta_2)$.

\begin{figure}[t]
    \centering
    \includegraphics[width=1\linewidth]{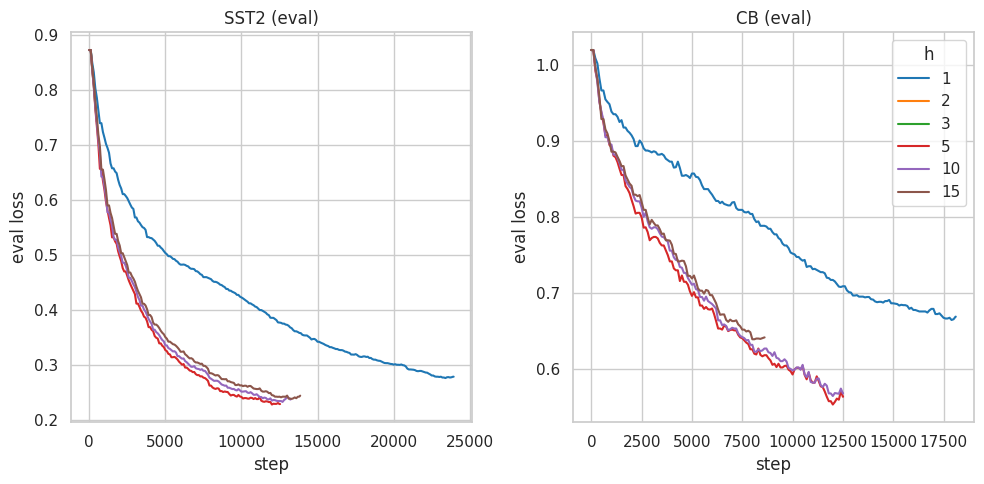}
    \caption{Evaluation loss with different $h$.}
    \label{fig:differenth}
\end{figure}

\begin{table}[h]
\caption{Performance comparison with different $(\beta_1, \beta_2)$.}
\vspace{-10pt}
\label{table:beta}
\begin{center}
\small
\begin{tabular}{lrrr}
\toprule
$(\beta_1, \beta_2)$ & SST2                              & COPA                              & SQuAD                             \\ \midrule
(0.7, 0.9)           & 91.6 (0.3)                        & 75.5 (4.0)                        & 76.1 (0.7)                        \\ \midrule
(0.7, 0.99)          & {\color[HTML]{333333} 90.9 (0.9)} & 75.3 (2.9)                        & 75.6 (0.9)                        \\
(0.6, 0.9)           & {\color[HTML]{333333} 91.1 (0.6)} & {\color[HTML]{333333} 75.8 (2.9)} & {\color[HTML]{333333} 75.6 (1.2)} \\
(0.8, 0.9)           & 91.5 (0.7)                        & {\color[HTML]{333333} 74.3 (2.9)} & 75.6 (1.5)                        \\
(0.7, 0.0), mSGD     & {\color[HTML]{333333} 90.9 (0.9)} & {\color[HTML]{333333} 75.3 (2.9)} & {\color[HTML]{333333} 75.6 (0.9)} \\
(0.0, 0.0), MeZO     & 90.9 (0.3)                        & {\color[HTML]{333333} 74.5 (3.6)} & {\color[HTML]{333333} 73.3 (0.2)} \\ \bottomrule
\end{tabular}
\end{center}
\end{table}

\subsection{Detailed Settings for \Cref{fig:loss_comparison}}\label{setting_fig1}

For (a), the learning rate is 1e-6, with $16$ training samples per class. For (b) and (c), the learning rate is 1e-7, with $1000$ training samples in total. We set $\beta_1=0.7, \beta_2=0.9, h=10$, so that AdaMeZO discards only a small part by truncating the moments and admits a smoother second moment estimation compared to the first. With an abuse of context, the choice of $(\beta_1, \beta_2)$ falls into the suggested area $0 < \beta_1 < \sqrt{\beta_2} < 1$ by \cite{zhang2022adam}. Fine-tuning terminates when either of the following happens. \begin{enumerate}
    \item Measure evaluation loss per $100$ steps. Evaluation loss does not drop for $5$ continual measures.
    \item Number of steps exceeds $40000$.
\end{enumerate}

\subsection{Detailed Settings for \Cref{sec:toy}}\label{sec:toydetail}

The expressions of the test functions are \begin{enumerate}
    \item $f_1(x, y) = 8 (x-1) ^ 2 (1.3 x ^2 + 2 x + 1) + 0.5 (y - 4) ^ 2$ \cite{zhao2024second}.
    \item $f_2(x, y) = (1.5 - x + xy)^2 + (2.25 - x + xy^2)^2 + (2.625 - x + xy^3)^2$ \cite{beale1958iterative}.
    \item $f_3(x, y) = 100 x^2 + y ^ 2$.
\end{enumerate}

Specifications on implementations are as \Cref{table:toy}. The setting follows on the following rule: \begin{enumerate}
    \item Set the learning rate of Adam to $0.01$,
    \item Tune the learning rate for ZO optimizers so that the trajectory lengths are comparable to Adam's. We allow a longer trajectory ($<1.6 \times$) for ZO optimizers.
\end{enumerate}

For MeZO and AdaMeZO, we allow only $2$ seeds coding $2$ gradient directions. This is to capture the situation where the number of steps, equivalently the total number of explored gradient directions (in thousands), is usually less than the number of dimensions of the LLMs (in billions).

\begin{table}[h]
\caption{Specifications for toy functions.}
\label{table:toy}
\centering
\small
\begin{tabular}{rclclclcc}
\toprule
\multicolumn{1}{l}{} & \multicolumn{2}{c}{Adam}            & \multicolumn{2}{c}{MeZO}             & \multicolumn{2}{c}{AdaMeZO}          & \multicolumn{1}{l}{\multirow{2}{*}{\# steps}} & \multicolumn{1}{l}{\multirow{2}{*}{Initialization}} \\ \cline{2-7}
\multicolumn{1}{l}{} & lr     & \multicolumn{1}{c}{length} & lr      & \multicolumn{1}{c}{length} & lr      & \multicolumn{1}{c}{length} & \multicolumn{1}{l}{}                          & \multicolumn{1}{l}{}                                \\ \hline
$f_1$                & $0.01$ & $3.0227$                   & $0.01$  & $4.6659$                   & $0.01$  & $4.5078$                   & $600$                                         & $(0.2, 6.75)$                                       \\
$f_2$                & $0.01$ & $4.3597$                   & $0.002$ & $5.5405$                   & $0.002$ & $5.3207$                   & $2500$                                        & $(-1, -1)$                                          \\
$f_3$                & $0.01$ & $1.4142$                   & $0.01$  & $1.4243$                   & $0.01$  & $1.8577$                   & $500$                                         & $(-1, 1)$                                           \\ 
\bottomrule
\end{tabular}
\end{table}

Trajectories at higher resolutions and 3D views of the loss landscapes are shown in \Cref{fig:bigtoy}.

\subsection{Detailed Settings for \Cref{sec:mlm}}\label{sec:mlmdetail}




\begin{table}[h]
\small
\caption{\small Hyperparameter settings.}
\centering
\begin{tabular}{ccccccc}
\toprule
\multicolumn{1}{c}{}     & $B$                   & $T$                              & $\eta$             & $q$                   & $\mu$                      & $(\beta_1, \beta_2)$ \\ \hline
\multirow{1}{*}{Table \ref{table:nlp_roberta_results_k16}}                  &           \multirow{1}{*}{$16$}            &                  \multirow{1}{*}{$1 \times 10^{5}$}                &          $1 \times 10 ^ {-6}$          &            \multirow{1}{*}{$5$}           &              \multirow{1}{*}{$1 \times 10^{-3}$}              &            \multirow{1}{*}{$(0.7, 0.9)$}                    \\
\multirow{1}{*}{Table \ref{table:nlp_opt_results_opt_1_3b}}                  &          \multirow{1}{*}{$16$}             &                     \multirow{1}{*}{$4 \times 10^{4}$}             &           $1 \times 10 ^ {-7}$         &              \multirow{1}{*}{$5$}         &              \multirow{1}{*}{$1 \times 10^{-3}$}              &              \multirow{1}{*}{$(0.7, 0.9)$}                  \\
\multirow{1}{*}{Table \ref{table:nlp_opt_results_llama3b}}                  &          \multirow{1}{*}{$16$}             &                     \multirow{1}{*}{$4 \times 10^{4}$}             &           $1 \times 10 ^ {-7}$         &              \multirow{1}{*}{$5$}         &              \multirow{1}{*}{$1 \times 10^{-3}$}              &              \multirow{1}{*}{$(0.7, 0.9)$}                  \\
\bottomrule
\end{tabular}
\end{table}

Fine-tuning terminates when either of the following conditions is met. \begin{enumerate}
    \item Measure evaluation loss per $100$ steps. Evaluation loss does not drop for $q$ continual measures.
    \item Number of steps exceeds $T$.
\end{enumerate}






\section{Additional Experiment Results}

\subsection{Larger Models}\label{sec:larger}

We report the performance of AdaMeZO on larger models to demonstrate the scalability of the optimizer as \Cref{table:nlp_opt_results_llama7b} \textcolor{black}{and \Cref{table:nlp_opt_results_opt13b}.}

\textcolor{black}{We also report the hyperparameter settings in \Cref{table:hyper_prefix} for \Cref{table:nlp_opt_results_opt30b}.}

\textcolor{black}{We include training loss and evaluation loss curves in \Cref{fig:trainloss13b} and \Cref{fig:evalloss13b}, respectively.}

\begin{table*}[h]
\setlength{\tabcolsep}{2pt}
\centering
    \caption{Main results on LLaMA-7B over language tasks.}
    \vspace{-10pt}
    \setlength{\tabcolsep}{1pt} 
    \label{table:nlp_opt_results_llama7b}
    \scriptsize
    \begin{center}
    \small
        \begin{tabular}{lccccccccccccc}
            \toprule
            \multicolumn{1}{c}{Task}  &\multicolumn{1}{c}{\bf SST-2}  &\multicolumn{1}{c}{\bf RTE}  &\multicolumn{1}{c}{\bf CB}  &\multicolumn{1}{c}{\bf BoolQ} &\multicolumn{1}{c}{\bf WSC}  &\multicolumn{1}{c}{\bf WIC} &\multicolumn{1}{c}{\bf MultiRC} &\multicolumn{1}{c}{\bf COPA}  &\multicolumn{1}{c}{\bf ReCoRD} &\multicolumn{1}{c}{\bf SQuAD}  &\multicolumn{1}{c}{\bf DROP} &  \textbf{Avg}&  \textcolor{black}{\textbf{Avg (w.o S,D)}} \\ 
            \multicolumn{1}{c}{Type} & \multicolumn{7}{c}{------------------ classification ------------------} & \multicolumn{2}{c}{-- multiple choice --} & \multicolumn{2}{c}{--- generation ---} \\
            \hline
            Zero-shot & 59.7&	49.8&	48.2&	65.0&	56.7&	50.6&	50.5&	84.0&	79.9&	58.6&	17.5&	56.4 & \textcolor{black}{60.4} \\
            \hline
            FO \footnotesize{($\geq 4\times$ memory)} & 95.0&	86.0&	94.1&	83.1&	54.5&	66.2&	79.3&	81.2&	75.4&	89.2&	39.7&	76.7 & \textcolor{black}{79.4} \\
            & (0.5)&	(2.2)&	(1.7)&	(0.5)&	(5.8)&	(4.9)&	(3.0)&	(2.2)&	(2.3)&	(1.0)&	(1.0) & -- & \textcolor{black}{--}\\
            \hdashline
            MeZO & 85.7&	54.7&	58.8&	68.3&	58.1&	56.9&	60.9&	82.5&	78.0&	71.9&	30.9&	64.2 & \textcolor{black}{67.1} \\
             & (1.9)&	(0.5)&	(3.8)&	(1.5)&	(2.9)&	(1.7)&	(2.7)&	(1.2)&	(1.8)&	(4.5)&	(1.1) & -- & \textcolor{black}{--} \\
            MeZO-switch & 87.2&	55.2&	60.6&	68.7&	60.2&	56.8&	60.5&	84.0&	80.3&	78.8&	32.3&	65.8 & \textcolor{black}{68.1} \\
             & (0.7)&	(1.2)&	(6.3)&	(1.2)&	(1.2)&	(0.5)&	(2.3)&	(0.8)&	(0.5)&	(3.2)&	(1.1) & -- & \textcolor{black}{--} \\

            \color{black} HiZOO &  90.9&	59.7&	\textbf{63.3}&	70.3&	59.8&	57.4&	\textbf{62.7}&	83.7&	79.3 & 21.3&	4.6 &59.4 &69.7 \\
             & (2.5)&	(2.9)&	(0.9)&	(1.5)&	(6.7)&	(0.2)&	(2.4)&	(1.2)&	(1.6) &(1.1) &(0.9) & -- & --\\
             \normalcolor
             
            \textit{\textbf{AdaMeZO}} & \textbf{91.4}&	\textbf{61.2}&	62.9&	\textbf{70.9}&	\textbf{60.5}&	\textbf{57.6}&	62.1&	\textbf{84.5}&	\textbf{80.5}&	\textbf{84.9}&	\textbf{36.2}&	\textbf{68.4} & \textcolor{black}{\textbf{70.2}} \\
             & (2.5)&	(2.6)&	(1.6)&	(2.2)&	(2.0)&	(1.1)&	(2.6)&	(3.1)&	(0.9)&	(0.9)&	(2.1) & -- & \textcolor{black}{--} \\
            \bottomrule
        \end{tabular}
    \end{center}
\end{table*}

\begin{table*}[h!]
\setlength{\tabcolsep}{2pt}
    \caption{Main results on OPT-13B over language tasks. OOM indicates that HiZOO encountered an out-of-memory error. A cell is marked as OOM if any of the evaluation seeds trigger an OOM. The official HiZOO implementation only supports single-GPU training, and the memory overhead of some instances exceeds the capacity of our largest GPU (A100 80GB), leading to OOM failures.}
    \vspace{-10pt}
    \setlength{\tabcolsep}{1pt} 
    \label{table:nlp_opt_results_opt13b}
    \centering
    \begin{center}
    \small
        \begin{tabular}{lccccccccccccc}
            \toprule
            \multicolumn{1}{c}{Task}  &\multicolumn{1}{c}{\bf SST-2}  &\multicolumn{1}{c}{\bf RTE}  &\multicolumn{1}{c}{\bf CB}  &\multicolumn{1}{c}{\bf BoolQ} &\multicolumn{1}{c}{\bf WSC}  &\multicolumn{1}{c}{\bf WIC} &\multicolumn{1}{c}{\bf MultiRC} &\multicolumn{1}{c}{\bf COPA}  &\multicolumn{1}{c}{\bf ReCoRD} &\multicolumn{1}{c}{\bf SQuAD}  &\multicolumn{1}{c}{\bf DROP} & \textbf{Avg}  & \textcolor{black}{\textbf{Avg (w.o S,D)}} \\ 
            \multicolumn{1}{c}{Type} & \multicolumn{7}{c}{------------------ classification ------------------} & \multicolumn{2}{c}{-- multiple choice --} & \multicolumn{2}{c}{--- generation ---} \\
            \hline
            Zero-shot & 58.8&	59.6&	46.4&	59.0&	38.5&	55.0&	46.9&	80.0&	81.2&	46.2&	14.6 & 53.2 & \textcolor{black}{58.3} \\
            \hline
            FO \footnotesize{($\geq 4 \times$ memory)} & 92.0&	70.8&	83.9&	77.1&	63.5&	55.0&	71.1&	79.0&	74.1&	84.9&	31.3 & 71.1 & \textcolor{black}{74.0} \\
            \hdashline
            MeZO & 92.1&	60.4&	67.8&	65.5&	56.6&	54.9&	56.7&	\textbf{87.0}&	80.2&	82.1&	30.6 & 66.7 & \textcolor{black}{69.0} \\
             & (0.5)&	(0.6)&	(1.4)&	(3.0)&	(7.9)&	(1.7)&	(0.8)&	(1.1)&	(1.0)&	(1.3)&	(1.5) & -- & \textcolor{black}{--} \\
            MeZO-switch & 92.6&	61.6&	66.9&	66.2&	56.9&	55.4&	57.5&	86.0&	\textbf{80.5}&	83.4&	30.5 & 67.0 & \textcolor{black}{69.3} \\
             & (0.2)&	(2.5)&	(1.0)&	(3.7)&	(8.4)&	(0.7)&	(0.5)&	(2.7)&	(1.1)&	(0.8)&	(0.9) & -- &\textcolor{black}{--} \\

            \color{black} HiZOO & 91.5&	62.5&	\textbf{68.2}&	OOM&	56.4&	55.4&	57.5&	86.2&	80.0&	83.5&	OOM&	--& --   \\
             & (1.3)&	(3.7)&	(2.2)&	--&	(8.3)&	(1.4)&	(0.6)&	(0.9)&	(1.4)&	(1.2)& -- & -- & --\\
             \normalcolor
             
            \textit{\textbf{AdaMeZO}} & \textbf{92.7}&	\textbf{63.0}&	67.8&	\textbf{70.6}&	\textbf{58.4}&	\textbf{55.8}&	\textbf{58.3}&	\textbf{87.0}&	80.1&	\textbf{83.7}&	\textbf{31.0} & \textbf{68.0} & \textcolor{black}{\textbf{70.4}} \\
             & (0.5)&	(6.1)&	(2.5)&	(3.6)&	(7.5)&	(0.6)&	(0.3)&	(1.1)&	(0.7)&	(1.3)&	(0.9) &-- & \textcolor{black}{--} \\
            \bottomrule
        \end{tabular}
    \end{center}
\end{table*}

\begin{figure}[htbp]
  \centering 
  
  \begin{subfigure}[b]{0.8\textwidth}
    \includegraphics[width=\linewidth]{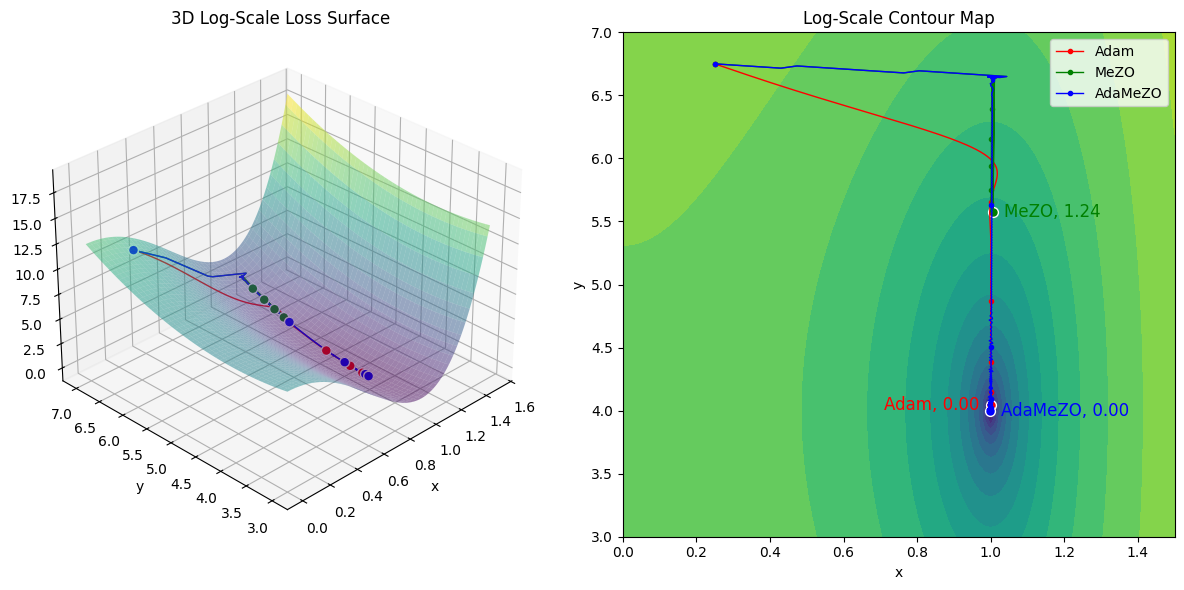} 
    \caption{Toy function $f_1(x, y) = 8 (x-1) ^ 2 (1.3 x ^2 + 2 x + 1) + 0.5 (y - 4) ^ 2$}
    \label{fig:sub1a}
  \end{subfigure}
  
  \begin{subfigure}[b]{0.8\textwidth}
    \includegraphics[width=\linewidth]{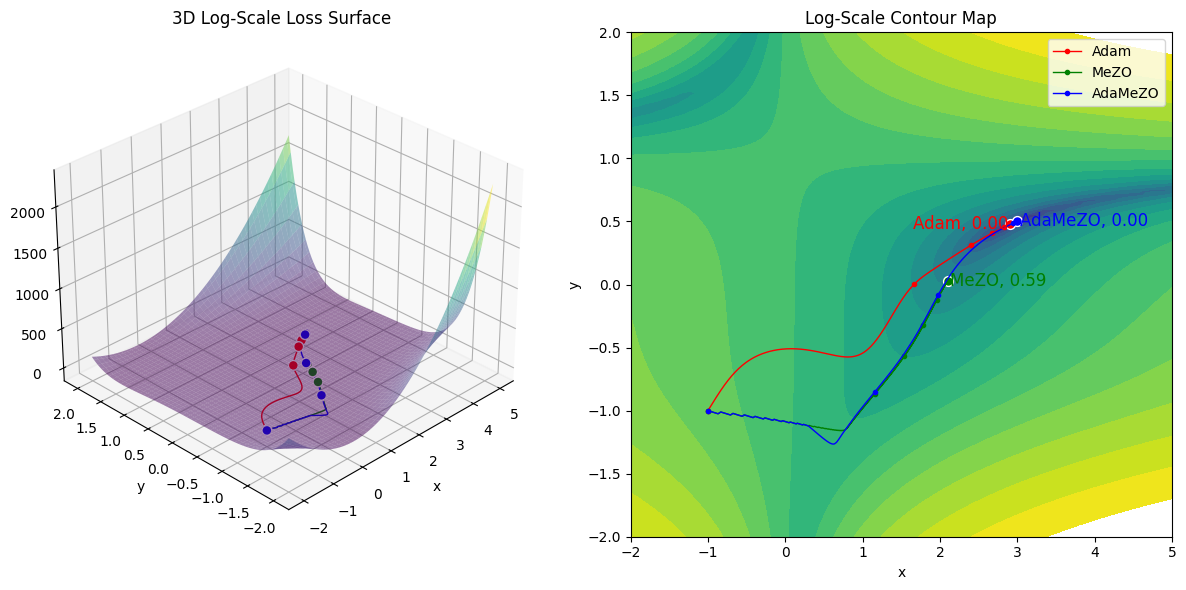}
    \caption{Toy function $f_2(x, y) = (1.5 - x + xy)^2 + (2.25 - x + xy^2)^2 + (2.625 - x + xy^3)^2$}
    \label{fig:sub2a}
  \end{subfigure}
  
  \begin{subfigure}[b]{0.8\textwidth}
    \includegraphics[width=\linewidth]{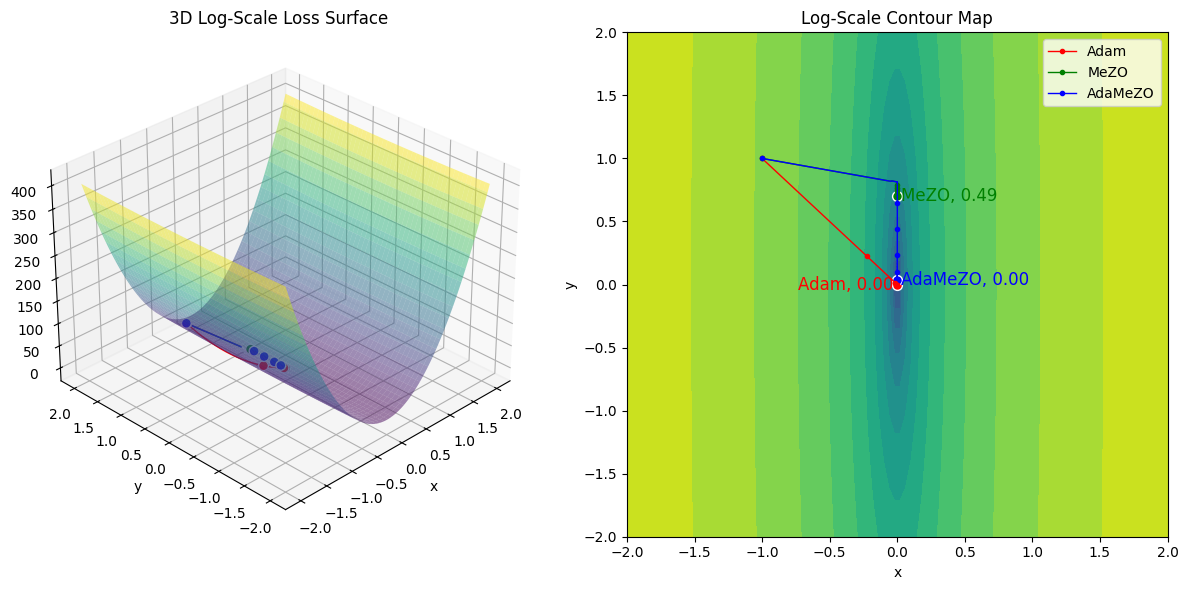}
    \caption{Toy function $f_3(x, y) = 100 x^2 + y ^ 2$}
    \label{fig:sub3a}
  \end{subfigure}

  \caption{Loss landscapes of the toy functions and optimization trajectories.} 
  \label{fig:bigtoy}
\end{figure}

\clearpage

\begin{figure}[h]
    \includegraphics[width=\linewidth]{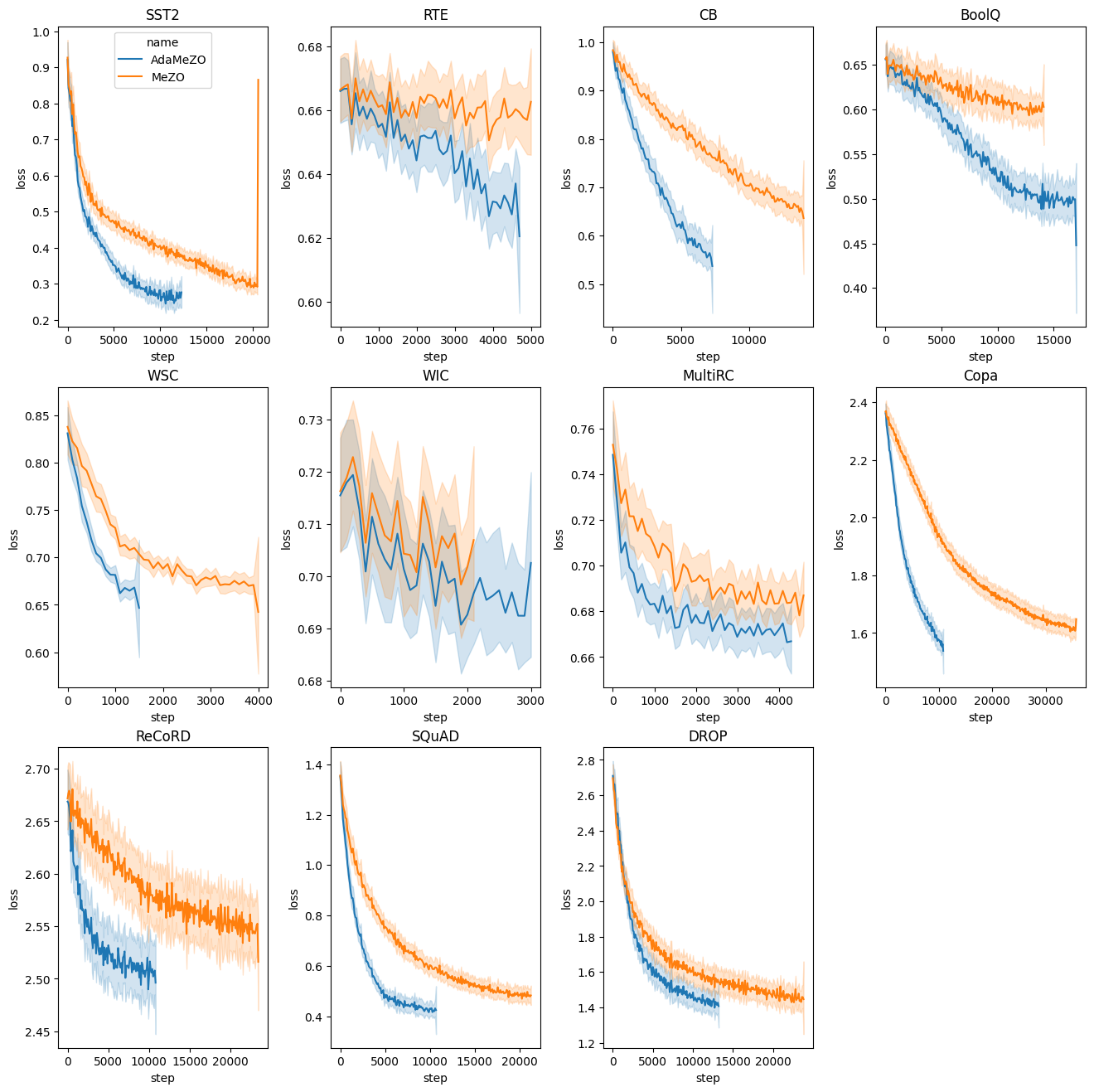} 
    \caption{Training loss curve of OPT-13B over language tasks.}
    \label{fig:trainloss13b}
\end{figure}

\begin{figure}[h]
    \includegraphics[width=\linewidth]{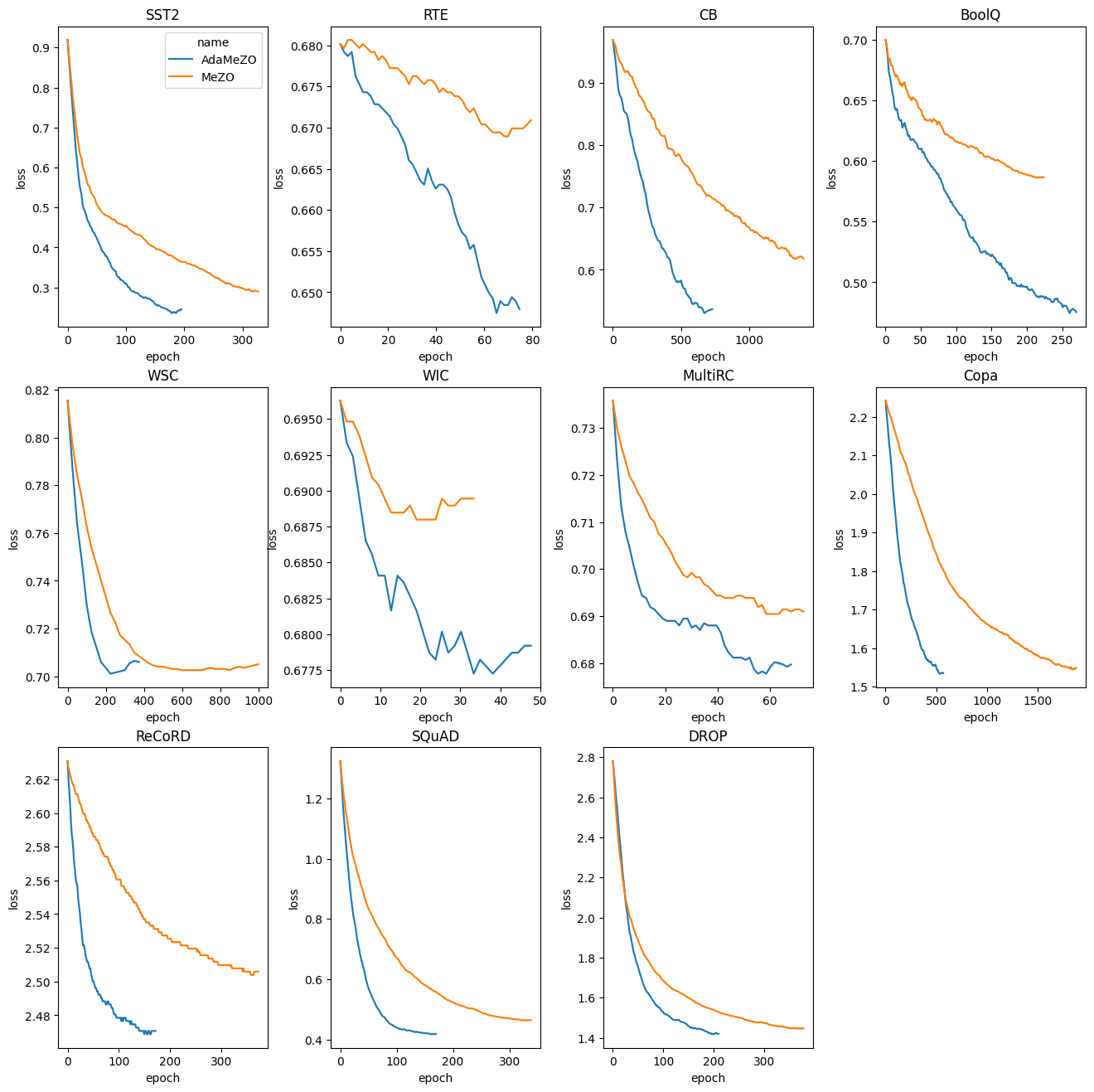} 
    \caption{Evaluation loss curve of OPT-13B over language tasks.}
    \label{fig:evalloss13b}
\end{figure}

\clearpage

\begin{table}[htbp]
    \centering
    \label{table:hyper_prefix}  
    \caption{\textcolor{black}{Hyperparameter settings for \Cref{table:nlp_opt_results_opt30b}.}}
    \color{black}
    \small
    \begin{tabular}{lcc}  
    \toprule  
    Experiment    & Hyperparameters & Values \\
    \midrule  
    \midrule
    \multirow{4}{*}{HiZOO (prefix)} & $B$      & 16 \\
                                   & $\eta$   & $\{5\mathrm{e}{-2}, 1\mathrm{e}{-2}, 5\mathrm{e}{-3}\}$ \\
                                   & $\mu$      & $1\mathrm{e}{-1}$ \\
                                   & \# prefix tokens& 5 \\
    \midrule
    \multirow{4}{*}{AdaMeZO (prefix)} & $B$      & 16 \\
                                   & $\eta$   & $\{7.5\mathrm{e}{-6}, 1\mathrm{e}{-5}, 2.5\mathrm{e}{-5}\}$ \\
                                   & $\mu$      & $1\mathrm{e}{-1}$ \\
                                   & \# prefix tokens& 5 \\
    \bottomrule  
    \end{tabular}
    \normalcolor
\end{table}

\begin{table}[htbp]
    \centering
    \label{table:hyper_prefix}  
    \caption{\textcolor{black}{Hyperparameter settings for HiZOO in \Cref{table:nlp_opt_results_opt_1_3b}, \Cref{table:nlp_opt_results_llama3b}, \Cref{table:nlp_opt_results_llama7b}, 
    and \Cref{table:nlp_opt_results_opt13b}.}}
    \color{black}
    \small
    \begin{tabular}{lcc}  
    \toprule  
    Experiment    & Hyperparameters & Values \\
    \midrule  
    \midrule
    \multirow{3}{*}{HiZOO} & $B$      & 16 \\
                                   & $\eta$   & $\{1\mathrm{e}{-6}, 5\mathrm{e}{-7}, 1\mathrm{e}{-7}\}$ \\
                                   & $\mu$      & $1\mathrm{e}{-3}$ \\
    \bottomrule  
    \end{tabular}
    \normalcolor
\end{table}

\clearpage

\begin{algorithm}[h]
\caption{AdaMeZO}\label{alg:adamezo}
\begin{algorithmic}
   \STATE {\bf Input:} Initialized model parameters $\boldsymbol{w}_0 \in \mathbb{R}^d$, loss function $\mathcal{L}: \mathbb{R}^d \to \mathbb{R}$, step budget $T$, perturbation scale $\mu$, learning rate $\eta$, horizon $h$, first EMA ratio $\beta_1$, second EMA ratio $\beta_2$, block strategy $B(\bs{w}) = \{\bs{w}^{(1)}, \dots, \bs{w}^{(b)}\}$, cancel factor $\beta_v$, warm-up steps $T_w$
   \STATE {\bf Output:} Trained model parameters $\boldsymbol{w}_T$

   \STATE {\tt seeds, projs} $\gets$ {\tt [], []}
   \FOR {$t = 1, \dots, T$}
   \STATE Sample batch $\mc{B}_t$ and random seed $s$
   \STATE Reset the PRNG with random seed $s$, spawn $\bs{z}_t \sim \mc{N}(\bs{0}, I_d)$
   \STATE Estimate $p_t$ using \Cref{eq:spsa} \COMMENT{in-place model perturbation}
   \STATE {\tt seeds.append(}$s${\tt)}, {\tt projs.append(}$p_t${\tt)}
   \STATE $\bs{w}_t \gets \bs{w}_t$
   \IF {$t > T_w$}
   \STATE {\tt states} $\gets$ {\tt [None]*} $(h, b)$
   \FOR {$\tau_b = 1, \dots, b$}
   \STATE $\bs{m}, \bs{v} \gets \bs{0}, \bs{0}$
   \FOR {$\tau_h = 1, \dots, h$}
   \STATE $p \gets {\tt projs}[-\tau_h]$
   \IF {{\tt states[}$\tau_h, \tau_b${\tt ] == None}}
   \STATE $s \gets {\tt seeds}[-\tau_h]$
   \STATE Reset the PRNG with random seed $s$, spawn $\bs{z} \sim \mc{N}(\bs{0}, I_{|\bs{w}^{(\tau_b)}|})$
   \ELSE
   \STATE Load {\tt states[}$\tau_h, \tau_b${\tt ]} to PRNG, spawn $\bs{z} \sim \mc{N}(\bs{0}, I_{|\bs{w}^{(\tau_b)}|})$
   \ENDIF
   \STATE Save PRNG state to {\tt states[}$\tau_h, \tau_b${\tt ]}
   \STATE $\bs{m} \gets \bs{m} + \beta_1^{\tau_h-1}p\bs{z}$
   \STATE $\bs{v} \gets \bs{v} + \beta_2^{\tau_h-1}p^2 (\bs{z} \odot \bs{z})$
   \ENDFOR
   \ENDFOR
   \STATE $\bs{w}_t^{(\tau_b)} \gets \bs{w}_{t}^{(\tau_b)} - \eta \beta_v \frac{\bs{m}}{\sqrt{\bs{v} + \epsilon}}$
   \ELSE
   \STATE Reset the PRNG with random seed $s$, spawn $\bs{z} \sim \mc{N}(\bs{0}, I_d)$
   \STATE $\bs{w}_t \gets \bs{w}_t - \eta p_t \boldsymbol{z}$
   \ENDIF
   \ENDFOR
\end{algorithmic}
\end{algorithm}

\clearpage
\section{Detailed Convergence Analysis}\label{sec:fullproof}

\begin{lemma}[Update expectations]\label{ge}
Given Assumption \ref{ass:bgv} to \ref{ass:fgd} and Assumption \ref{lm:magnus}, for warm-up steps, it holds that \begin{align}
    \bb{E}[\bs{u}_t] &= \nabla\mc{L}(\bs{w}_t) + \mc{O}(\mu), \label{eq:warm-up1}\\
    \bb{E}[\|\bs{u}_t\|_2^2] &\leq \frac{\eta^2 L \mc{O}(r)}{2} (\|\nabla \mc{L}(\bs{w}_t)\|_2^2 + \sigma^2) + \mc{O}(\mu^2). \label{eq:warm-up2}
\end{align}
After warm-up steps, it holds that \begin{align}
    \bb{E}[\bs{u}_t] &= \Sigma_t^{-1} (\nabla\mc{L}(\bs{w}_t) + \mc{O}(\bar{\beta}_1 L \eta)) + \mc{O}(\mu), \label{eq:pwarm-up1} \\
    \bb{E}[\|\bs{u}_t\|_2^2] &\leq (2{\rm tr}(\Sigma_t^{-1}) +4 s_l^{-1}) (\|\nabla\mc{L}(\boldsymbol{w}_t)\|_{\Sigma_t^{-1}}^2 + \sigma^2 + \mc{O}(\bar{\beta}_1^2 L^2 \eta^2)) + \mc{O}(\mu^2), \label{eq:pwarm-up2}
\end{align}
where $\sigma$ captures the batch stochasticity in first-order, $\bar{\beta}_1 = 1 - \beta_1$, and $\Sigma_t$ is the diagonal matrix with $\boldsymbol{v}_t$ being its diagonal.
\end{lemma}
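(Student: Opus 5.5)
The plan is to treat the warm-up and post-warm-up regimes separately, each time decomposing the SPSA estimator of \Cref{eq:spsa} into a directional-derivative part and a Taylor remainder. By Assumption \ref{ass:lsmooth}, a second-order expansion of $\mc{L}(\bs{w}_{t-1}\pm\mu\bs{z}_t,\mc{B}_t)$ gives
\begin{align}
\bs{g}_t = \bs{z}_t\bs{z}_t^\top\nabla\mc{L}(\bs{w}_t,\mc{B}_t) + \bs{r}_t, \qquad \|\bs{r}_t\|_2 \le \mc{O}(\mu L)\|\bs{z}_t\|_2^3 . \notag
\end{align}
Taking the Gaussian expectation kills the odd third-order terms and leaves a remainder whose mean is $\mc{O}(\mu)$ and whose second moment is $\mc{O}(\mu^2)$, since Gaussian moments of fixed order are finite. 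All $\mc{O}(\mu)$ and $\mc{O}(\mu^2)$ terms in the statement come from this remainder. From here on I treat the main term as $\bs{z}\bs{z}^\top\nabla\mc{L}(\bs{w}_t,\mc{B}_t)$.

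\textbf{Warm-up steps.} Here the update direction is $\bs{u}_t=\bs{g}_t$ (with $\eta$ absorbed as the statement's scaling suggests).
\begin{itemize}
\item For \Cref{eq:warm-up1}, use $\bb{E}_{\bs{z}}[\bs{z}\bs{z}^\top]=I_d$ and then the unbiasedness in Assumption \ref{ass:bgv}.
\item For \Cref{eq:warm-up2}, write $\|\bs{z}\bs{z}^\top\bs{a}\|_2^2=(\bs{z}^\top\bs{a}\bs{a}^\top\bs{z})(\bs{z}^\top\bs{z})$. The step-size factor enters through the effective-rank argument of \cite{malladi2023fine}, where the relevant quantity is $\bs{z}^\top\mc{H}\bs{z}$ rather than $\bs{z}^\top\bs{z}$.
\item Apply Lemma \ref{lm:magnus} with $A=\bs{a}\bs{a}^\top$ and $B=\mc{H}(\bs{w}_t)$. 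This gives $\|\bs{a}\|^2{\rm tr}\mc{H}+2\bs{a}^\top\mc{H}\bs{a}\le(r+2)\|\mc{H}\|_{op}\|\bs{a}\|^2\le\mc{O}(r)L\|\bs{a}\|^2$ by Assumption \ref{ass:ler}.
\item Finally, $\bb{E}_{\mc{B}}\|\bs{a}\|^2\le\|\nabla\mc{L}(\bs{w}_t)\|^2+\sigma^2$ by Assumption \ref{ass:bgv}.
\end{itemize}

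\textbf{Post-warm-up steps.} Here the update is $\bs{u}_t=\Sigma_t^{-1}\bs{m}_t$ after normalization, with $\Sigma_t$ diagonal and built from $\bs{v}_t$. Following \cite{zhao2024second}, I will treat $\Sigma_t$ as fixed (conditionally independent of the current direction) and bounded by Assumption \ref{ass:bsm}; this decoupling is the modeling step I expect to carry the argument.
\begin{itemize}
\item For \Cref{eq:pwarm-up1}, the truncated EMA $\bs{m}_t$ is a weighted sum of the last $h$ estimates. Its expectation is a weighted average of past gradients plus $\mc{O}(\mu)$. Assumption \ref{ass:fgd} replaces it by $\nabla\mc{L}(\bs{w}_t)+\mc{O}(\bar\beta_1L\eta)$, and multiplying by $\Sigma_t^{-1}$ gives the claim.
\item For \Cref{eq:pwarm-up2}, split $\bs{m}_t=\bs{z}_t\bs{z}_t^\top\bs{a}_t+(\bs{m}_t-\bs{z}_t\bs{z}_t^\top\bs{a}_t)$ and use $\|x+y\|^2\le2\|x\|^2+2\|y\|^2$.
\item For the fresh term, apply Lemma \ref{lm:magnus} with $A=\bs{a}\bs{a}^\top$ and $B=\Sigma_t^{-2}$. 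This gives $\|\bs{a}\|^2{\rm tr}\Sigma_t^{-2}+2\bs{a}^\top\Sigma_t^{-2}\bs{a}$, which I bound by $({\rm tr}\Sigma_t^{-1}+2s_l^{-1})\|\bs{a}\|^2_{\Sigma_t^{-1}}$ using $\Sigma_t^{-1}\preceq s_l^{-1}I$.
\item For the drift part, bound it by $\mc{O}(\bar\beta_1^2L^2\eta^2)$ via Assumption \ref{ass:fgd}, scaled by the same preconditioner factor.
\item Collecting terms, and using $\|\cdot\|_{\Sigma^{-1}}$ equivalence to absorb $\sigma^2$, yields the factor $(2{\rm tr}\Sigma_t^{-1}+4s_l^{-1})$.
\end{itemize}

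\textbf{Main obstacle.} The hardest step is justifying the decoupling of $\Sigma_t$ from $\bs{z}_t$. In the algorithm, $\bs{v}_t$ contains $\bs{z}_t\odot\bs{z}_t$, so conditioning is not exact. I would first try to bound the contribution of the current step to $\bs{v}_t$ using $\beta_2$-weighting together with the entrywise bounds of Assumption \ref{ass:bsm}, so the error folds into the constants. Failing that, I would simply adopt the same independence convention as \cite{zhao2024second}.
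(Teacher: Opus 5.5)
Your warm-up argument (reading \eqref{eq:warm-up2} as the effective-rank bound on $\bb{E}[\bs{u}_t^\top\mc{H}\bs{u}_t]$) and your treatment of \eqref{eq:pwarm-up1} agree with the paper. The paper simply defers to Lemma 2 of \cite{malladi2023fine} and, like you, tacitly treats $\Sigma_t$ as fixed. The gap is in the post-warm-up variance bound \eqref{eq:pwarm-up2}, at the step where you bound the fresh term. For $\Sigma_t^{-1}\bs{z}\bs{z}^\top\bs{a}$, Lemma \ref{lm:magnus} with $A=\bs{a}\bs{a}^\top$ and $B=\Sigma_t^{-2}$ gives exactly $\|\bs{a}\|_2^2\,{\rm tr}(\Sigma_t^{-2})+2\|\bs{a}\|^2_{\Sigma_t^{-2}}$. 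The first term is \emph{not} bounded by ${\rm tr}(\Sigma_t^{-1})\|\bs{a}\|^2_{\Sigma_t^{-1}}$. Take $\Sigma_t={\rm diag}(s_u,s_l,\dots,s_l)$ and $\bs{a}=\bs{e}_1$: the two sides are $s_u^{-2}+(d-1)s_l^{-2}$ and $s_u^{-2}+(d-1)s_l^{-1}s_u^{-1}$. From $\Sigma_t^{-1}\preceq s_l^{-1}I$ you only get ${\rm tr}(\Sigma_t^{-2})\le s_l^{-1}{\rm tr}(\Sigma_t^{-1})$. That leaves $\|\bs{a}\|_2^2$ rather than $\|\bs{a}\|^2_{\Sigma_t^{-1}}$, and converting costs a factor $s_u/s_l$. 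In the same example with $s_u=3s_l$ and $d\ge5$, the exact single-direction second moment already exceeds $(2{\rm tr}(\Sigma_t^{-1})+4s_l^{-1})\|\bs{a}\|^2_{\Sigma_t^{-1}}$, so the stated constant is unattainable under your model.

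The paper gets that constant through a different modeling choice. It writes each preconditioned estimate in the symmetric form $\Sigma_t^{-1/2}\bs{z}_j\bs{z}_j^\top\Sigma_t^{-1/2}\bs{b}$, with $\bs{b}=\nabla\mc{L}(\bs{w}_t,\mc{B}_t)+\mc{O}(\bar\beta_1L\eta)$; this has the same mean $\Sigma_t^{-1}\bs{b}$ but a different second moment. It then applies Lemma \ref{lm:magnus} with $A=\Sigma_t^{-1/2}\bs{b}\bs{b}^\top\Sigma_t^{-1/2}$ and $B=\Sigma_t^{-1}$, which gives ${\rm tr}(\Sigma_t^{-1})\|\bs{b}\|^2_{\Sigma_t^{-1}}+2\|\bs{b}\|^2_{\Sigma_t^{-2}}$ exactly, and $\Sigma_t^{-2}\preceq s_l^{-1}\Sigma_t^{-1}$ finishes. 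Your literal model is arguably closer to Algorithm \ref{alg:adamezo}. If you keep it, you must carry the extra $s_u/s_l$. That is harmless for Theorem \ref{thm:meanqua}, where the choice of $\eta$ absorbs it, but it does not give the lemma as stated.

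Second, your split $\bs{m}_t=\bs{z}_t\bs{z}_t^\top\bs{a}_t+(\bs{m}_t-\bs{z}_t\bs{z}_t^\top\bs{a}_t)$ does not make the second piece $\mc{O}(\bar\beta_1L\eta)$. That piece contains the $h-1$ earlier estimates $\bs{z}_{t-\tau}\bs{z}_{t-\tau}^\top\bs{a}_{t-\tau}$ along independent directions, each with second moment of order $(d+2)\|\bs{a}_{t-\tau}\|_2^2$. Assumption \ref{ass:fgd} controls the drift of the vector being estimated over the horizon, not this direction noise. Even read literally, it only yields $\|\bs{m}_t-\bs{z}_t\bs{z}_t^\top\bs{a}_t\|_2\le\|\nabla\mc{L}(\bs{w}_t)-\bs{z}_t\bs{z}_t^\top\bs{a}_t\|_2+\mc{O}(\bar\beta_1L\eta)$, and the first term is the full zeroth-order noise.

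The paper instead keeps all $h$ terms on an equal footing, each a single-direction estimate of the same $\bs{b}$, so the drift sits inside the quadratic form. Three steps then finish it:
\begin{enumerate}
\item The factor $2$ comes from peeling off the $\mc{O}(\mu)$ remainder via $\|x+y\|_2^2\le2\|x\|_2^2+2\|y\|_2^2$.
\item Convexity of $\|\cdot\|_2^2$ moves the square inside the average over directions.
\item Lemma \ref{lm:magnus} is applied term by term.
\end{enumerate}
Replacing your split with this convexity step, and your $\Sigma_t^{-1}\bs{z}\bs{z}^\top$ model with the symmetric one, recovers \eqref{eq:pwarm-up2}.
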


\begin{proof}
The bounds for the warm-up phase follow Proof of Lemma 2 in \cite{malladi2023fine}. 

After the warm-up case, by the definition of $\bs{u}_t$, we have \begin{equation*}\begin{aligned}
    \bs{u}_t = \Sigma_t^{-1}\bs{m}_t,
\end{aligned}\end{equation*}
where \begin{align}
    \Sigma_t := \beta_v \sqrt{ \sum_{i=0}^{h-1} \beta_2^i\text{diag}\left(\frac{1}{n}\sum_{j=1}^n \bs{g}_{t-i,j} \odot \bs{g}_{t-i,j}\right)}, \quad \bs{g}_{t,j} := \bs{z}^\top_j\nabla\mc{L}(\bs{w}_t, \mc{B}_t)\bs{z}_j, \notag
\end{align}
with $\beta_v$ is a normalizing factor connected to $\beta_1$ and $\beta_2$ to cancel out all $\beta_1$ and $\beta_2$ related terms.

Moreover, \begin{align*}
    & \bb{E}\left[{\|\bs{u}_t\|_2^2}\right] \\
    =& \bb{E}_{\mc{B}_t, \bs{z}_j}\left[\|\frac{1}{n}\sum_{j=1}^n\Sigma_t^{-\frac12}\bs{z}_j\bs{z}_j^\top\Sigma^{-\frac12}(\nabla{\mc{L}(\bs{w}_t, \mc{B}_t)} + \mc{O}(\bar{\beta}_1 L \eta)) + \mc{O}(\mu)\|_2^2\right] \\
    \overset{(a)}{\leq}& 2 \bb{E}_{\mc{B}_t, \bs{z}}\left[\|\frac1n \sum_{j=1}^n \Sigma_t^{-\frac12} \bs{z}_j \bs{z}_j^\top \Sigma_{t}^{-\frac12}(\nabla{\mc{L}(\bs{w}_t, \mc{B}_t)} + \mc{O}(\bar{\beta}_1 L \eta))\|_2^2\right] + \mc{O}(\mu^2) \\
    \overset{(b)}{\leq}& \frac2n \sum_{j=1}^n\bb{E}_{\mc{B}_t, \bs{z}_j}\left[\|\Sigma_t^{-\frac12}\bs{z}_j\bs{z}_j^\top\Sigma^{-\frac12}(\nabla{\mc{L}(\bs{w}_t, \mc{B}_t)} + \mc{O}(\bar{\beta}_1 L \eta))\|_2^2\right] + \mc{O}(\mu^2) \\
    \overset{(c)}{=}& 2 {\rm tr}(\Sigma_t^{-1})\bb{E}_{\mc{B}_j}\left[(\nabla{\mc{L}(\bs{w}_t, \mc{B}_t)} + \mc{O}(\bar{\beta}_1 L \eta))^\top\Sigma_t^{-1}(\nabla{\mc{L}(\bs{w}_t, \mc{B}_t)} + \mc{O}(\bar{\beta}_1 L \eta))\right] \\
    &+ 4 \bb{E}_{\mc{B}_j}\left[(\nabla{\mc{L}(\bs{w}_t, \mc{B}_t)} + \mc{O}(\bar{\beta}_1 L \eta))^\top\Sigma_t^{-2}(\nabla{\mc{L}(\bs{w}_t, \mc{B}_t)} + \mc{O}(\bar{\beta}_1 L \eta))\right] + \mc{O}(\mu^2) \\
    \overset{(d)}{\leq}& (2{\rm tr}(\Sigma_t^{-1}) + 4s_l^{-1}) \bb{E}_{\mc{B}_j}\left[(\nabla{\mc{L}(\bs{w}_t, \mc{B}_t)} + \mc{O}(\bar{\beta}_1 L \eta))^\top\Sigma_t^{-1}(\nabla{\mc{L}(\bs{w}_t, \mc{B}_t)} + \mc{O}(\bar{\beta}_1 L \eta))\right] + \mc{O}(\mu^2) \\
    \overset{(e)}{\leq}& (2{\rm tr}(\Sigma_t^{-1}) + 4s_l^{-1}) (\|\nabla \mc{L}(\bs{w}_t)\|_{\Sigma^{-1}}^2 + \mc{O}(\bar{\beta}_1 L \eta) + s_l^{-1}\sigma_t^2) + \mc{O}(\bar{\beta}_1 L \eta).
    \end{align*}
where $(a)$ is by $\|a+b\|_2^2 \leq \|a\|_2^2 + \|b\|_2^2 + 2\|ab\|_2 \leq 2\|a\|_2^2 + 2 \|b\|_2^2$; $(b)$ is by the convexity of the function $\|\cdot\|^2$; $(c)$ is by setting $A = \bb{E}_{\mc{B}_j}\left[\Sigma_t^{-\frac12}(\nabla{\mc{L}(\bs{w}_t, \mc{B}_t)} + \mc{O}(\bar{\beta}_1 L \eta))^\top(\nabla{\mc{L}(\bs{w}_t, \mc{B}_t)} + \mc{O}(\bar{\beta}_1 L \eta))\Sigma_t^{-\frac12}\right]$ and $B = \Sigma_t^{-1}$, then apply Assumption \ref{lm:magnus}; $(d)$ is by Assumption \ref{ass:bsm}; finally $(e)$ by Assumption \ref{ass:bgv}.
\end{proof}

Finally, we establish \Cref{thm:meanqua}.

\begin{proof}
Split the full summation into the warm-up phase and the post-warm-up phase as follows.
\begin{align*}
    \frac{1}{T}\sum_{t=1}^T\|\nabla\mc{L}(\bs{w}_t)\|_2^2 = \frac{1}{T} \underbrace{\sum_{t=1}^{T_w} \|\nabla\mc{L}(\bs{w}_t)\|_2^2}_{{\rm warm-up}} + \frac{1}{T} \sum_{t=T_w+1}^T \|\nabla\mc{L}(\bs{w}_t)\|_2^2.
\end{align*}

Choose \begin{align*}
    \eta \leq \min \left\{\frac{1}{s ({\rm tr} \Sigma_{t}^{-1} + 2 s_l^{-1})\sqrt{T}}, \frac{1}{L \mc{O}(r) \sqrt{T}}, \frac{1}{s\bb{E}[\mc{L}(\bs{w}_1)] - \bb{E}[\mc{L}(\bs{w}_T)]\sqrt{T}}\right\},
\end{align*}
\Cref{eq:warm-up1} and \eqref{eq:warm-up2} with Assumption \ref{ass:lsmooth} yields \begin{align*}
    \bb{E}[\mc{L}(\bs{w}_{t+1})] &\leq \mc{L}(\bs{w}_t) - \eta \|\nabla \mc{L}(\bs{w}_t)\|_2^2 + \frac{\eta^2 L \mc{O}(r)}{2} (\|\nabla\mc{L}(\bs{w}_t)\|_2^2 + \sigma^2) + \mc{O}(\mu^2) \\
    &\leq \mc{L}(\bs{w}_t) - \frac{\eta}{2} \|\nabla \mc{L}(\bs{w}_t)\|_2^2 + \frac{\eta^2 L\sigma^2\mc{O}(r)}{2} + \mc{O}(\mu^2).
\end{align*}

\Cref{eq:pwarm-up1} and \eqref{eq:pwarm-up2} with Assumption \ref{ass:lsmooth} yields \begin{align*}
    \bb{E}[\mc{L}(\bs{w}_{t+1})] &\leq \mc{L}(\bs{w}_t) - \eta \|\nabla \mc{L}(\bs{w}_{t})\|_{\Sigma_t^{-1}}^2 + L \eta^2 ({\rm tr} \Sigma_t^{-1} + 2 s_l^{-1}) \left(\|\nabla \mc{L}(\bs{w}_t)\|_{\Sigma_t^{-1}}^2 + \mc{O}(\bar{\beta}_1^2 L^2 \eta^2) + \sigma^2\right) + \mc{O}(\mu^2) \\
    &\leq \mc{L}(\bs{w}_{t}) - \frac{\eta}{2} \|\nabla \mc{L}(\bs{w}_t)\|_{\Sigma_t^{-1}}^2 + L \eta^2 (\mc{O}(\bar{\beta}_1^2 L^2 \eta^2) + \sigma^2) ({\rm tr}\Sigma_t^{-1} + 2 s_l^{-1}) + \mc{O}(\mu^2).
\end{align*}

So, for the warm-up phase, \begin{align}
    \frac{1}{T} \sum_{t=1}^{T_w} \|\nabla \mc{L}(\bs{w}_t)\|_2^2 \leq \frac{2}{\eta T}(\mc{L}(\bs{w}_1) - \bb{E}[\mc{L}(\bs{w}_{T_w})]) + \frac{T_w L \eta \sigma^2 \mc{O}(r)}{T} + \mc{O}(\mu^2), \label{eq:warmup_e}
\end{align}

and for the post-warm-up phase, \Cref{eq:pwarm-up1} and \eqref{eq:pwarm-up2} with Assumption \ref{ass:lsmooth} yields \begin{align}
    &\frac{1}{T} \sum_{t=T_w+1}^{T} \|\nabla \mc{L}(\bs{w}_t)\|_2^2  \notag \\
    \leq& \frac{s_u}{T} \sum_{t=T_w+1}^T \|\nabla\mc{L}(\bs{w}_t)\|_{\Sigma_t^{-1}}^2  \notag \\
    \leq& \frac{2 s_u}{\eta T}(\bb{E}[\mc{L}(\bs{w}_{T_w+1})] - \bb{E}[\mc{L}(\bs{w}_T)]) + \frac{s_u (T-T_w)L \eta(\mc{O}(\bar{\beta}_1^2 L^2 \eta^2) + \sigma^2) ({\rm tr} \Sigma_{t}^{-1} + 2 s_l^{-1})}{T} + \mc{O}(\mu^2)\label{eq:pwarmup_e}.
\end{align}

Take $s = \max\{1, s_u\}$, combine \Cref{eq:warmup_e} and \eqref{eq:pwarmup_e}, \begin{align*}
    e \leq& \frac{2 s}{\eta T}(\bb{E}[\mc{L}(\bs{w}_{1})] - \bb{E}[\mc{L}(\bs{w}_{T})]) + \frac{T_w \eta L \sigma^2 \mc{O}(r)}{T} + sL \eta (\mc{O}(\bar{\beta}_1^2 L^2 \eta^2) + \sigma^2)({\rm tr}\Sigma_t^{-1} + 2 s_l^{-1}) + \mc{O}(\mu^2) \\
    \leq& \frac{L \sigma^2}{\sqrt{T}} + \frac{2}{T \sqrt{T}} + \frac{T_w (\sigma^2 + \mc{O}(\bar{\beta}_1^2 L^2 \eta^2))}{T\sqrt{T}} + \mc{O}(\mu^2).
\end{align*}
We omit the higher order terms to arrive at the target.
\end{proof}


\end{document}